\tikzset{
    -Latex,auto,node distance =1 cm and 1 cm,semithick,
    state/.style ={ellipse, draw, minimum width = 0.7 cm},
    point/.style = {circle, draw, inner sep=0.04cm,fill,node contents={}},
    bidirected/.style={Latex-Latex,dashed},
    el/.style = {inner sep=2pt, align=left, sloped}
}
\pgfplotsset{compat=1.15}
\tikzset{
  prefix after node/.style={
    prefix after command={\pgfextra{#1}}
  },
  /semifill/ang/.store in=\semi@ang,
  /semifill/ang=0,
  /semifill/delta/.store in=\semi@delta,
  /semifill/delta=180,
  semifill/.style={
    circle, draw,
    prefix after node={
      \typeout{aaa \semi@ang}
      \let\nodename\tikz@last@fig@name
      \fill[/semifill/.cd, /semifill/.search also={/tikz}, #1, pattern=north east lines, pattern color=green]
        let \p1 = (\nodename.north), \p2 = (\nodename.center) in
        let \n1 = {(\y1 - \y2)} in
        (\nodename.\semi@ang) arc [radius=\n1, start angle=\semi@ang, delta angle=\semi@delta];
    },
  }
}
\newcommand{\pd}[2]{\frac{\partial #1}{\partial #2}}
\newcommand{\pp}[1]{\left( #1 \right)}
\newcommand{\gtheta}[1]{g_{\theta}\pp{ #1 }}
\newcommand{\Xtr}{X_{\textrm{train}}}
\newcommand{\Ytr}{Y_{\textrm{train}}}
\newcommand{\Ztr}{Z_{\textrm{train}}}
\newcommand{\ntr}{n_{\textrm{train}}}
\newcommand{\Ktr}{K_{\textrm{train}}}
\newcommand{\Ftr}{F_{\textrm{train}}}
\newcommand{\Ftrhat}{\hat{F}_{\textrm{train}}}
\newcommand{\Xtest}{X_{\textrm{test}}}
\newcommand{\Ytest}{Y_{\textrm{test}}}
\newcommand{\Ztest}{Z_{\textrm{test}}}
\newcommand{\ntest}{n_{\textrm{test}}}
\newcommand{\Ktest}{K_{\textrm{test}}}
\newcommand{\Kmix}{K_{\textrm{train}\times\textrm{test}}}
\newcommand{\Kmixhat}{\hat{K}_{\textrm{train}\times\textrm{test}}}
\newcommand{\ftest}{f_{\textrm{test}}}
\newtheorem{theorem}{Theorem}[subsection]
\newtheorem{definition}{Definition}[subsection]
\newtheorem{corollary}{Corollary}[theorem]
\newcommand{\es}[2] {\begin{equation} \label{#1} \begin{split} #2 \end{split} \end{equation}}
\definecolor{darkgreen}{rgb}{0,0.6,0}
\definecolor{purple}{rgb}{0.5,0,0.5}
\definecolor{green}{RGB}{1,50,32}
\icmltitlerunning{Whitening and Second Order Optimization Impair Generalization}
\begin{document}

\twocolumn[
\icmltitle{
Whitening and Second Order Optimization Both Make Information in the Dataset Unusable During Training,
and Can Reduce or Prevent Generalization
}




\begin{icmlauthorlist}
\icmlauthor{Neha S. Wadia}{to}
\icmlauthor{Daniel Duckworth}{goo}
\icmlauthor{Samuel S. Schoenholz}{goo}
\icmlauthor{Ethan Dyer}{goo}
\icmlauthor{Jascha Sohl-Dickstein}{goo}
\end{icmlauthorlist}

\icmlaffiliation{to}{University of California, Berkeley}
\icmlaffiliation{goo}{Google Brain}

\icmlcorrespondingauthor{Neha S. Wadia}{neha.wadia@berkeley.edu}

\icmlkeywords{Machine Learning, ICML}

\vskip 0.3in
]



\printAffiliationsAndNotice{}  


\begin{abstract}
Machine learning is predicated on the concept of generalization: a model achieving low error on a sufficiently large training set should also perform well on novel samples from the same distribution. We show that both data whitening and second order optimization can harm or entirely prevent generalization. In general, model training harnesses information contained in the sample-sample second moment matrix of a dataset. For a general class of models, namely models with a fully connected first layer, we prove that the information contained in this matrix is the \emph{only information} which can be used to generalize. Models trained using whitened data, or with certain second order optimization schemes, have less access to this information, resulting in reduced or nonexistent generalization ability. We experimentally verify these predictions for several architectures, and further demonstrate that generalization continues to be harmed even when theoretical requirements are relaxed. However, we also show experimentally that {\em regularized} second order optimization can provide a practical tradeoff, where training is accelerated but less information is lost, and generalization can in some circumstances even improve.
\end{abstract}

\section{Introduction}

Whitening is a data preprocessing step that removes correlations between input features
(see Fig.~\ref{fig:whitening_explanation}).
It is used across many scientific disciplines, including 
geology \citep{gillespie1986color}, 
physics \citep{jenet2005detecting}, 
machine learning \citep{lecun1998-efficient-backprop}, 
linguistics \citep{abney2007semisupervised}, 
and chemistry \citep{bro2014principal}. 
It has a particularly rich history in neuroscience, where it has been proposed as a 
mechanism by which biological vision realizes Barlow's redundancy reduction hypothesis \citep{attneave1954, barlow1961, atick1992, dan1996, simoncelli2001}.

Whitening is often recommended since, by standardizing the variances in each direction in 
feature space, it typically speeds up the convergence of learning algorithms
\citep{lecun1998-efficient-backprop,wiesler2011}, and causes models to 
better capture contributions from low variance feature directions. 
Whitening can also encourage models to focus on
more fundamental higher-order statistics in data, by removing second-order statistics~\citep{hyvrinen2009}.
Whitening has further been a direct inspiration for deep learning techniques such as batch normalization~\citep{ioffe2015-batch-norm} and dynamical isometry~\citep{pennington2017resurrecting,xiao2018dynamical}.

\subsection{Whitening Destroys Information Useful for Generalization}

Our argument proceeds in two parts: First, we prove that when a model with 
a fully connected first layer whose weights are initialized isotropically is trained
with either
gradient descent or stochastic gradient descent (SGD), 
information in the data covariance matrix is the \emph{only} information that can 
be used to generalize. This result is agnostic to
the choice of loss function and to the architecture of the model after the first layer.
Second, we show that whitening always destroys information in the data covariance
matrix.


Whitening the data and then training
with gradient descent or SGD therefore results in either diminished or nonexistent
generalization properties compared to the same model 
trained on unwhitened data.
The seriousness of the effect varies with the difference between 
the number of datapoints $n$ and the 
number of features $d$, worsening as $n-d$ gets smaller.

Empirically, we find that this effect holds even when the first layer is not
fully connected and when its weight initialization is not isotropic - for example,
in a convolutional network trained from a Xavier initialization.

\begin{figure*}[t]
\begin{center}
\centering
  \begin{overpic}[width=0.7\linewidth]{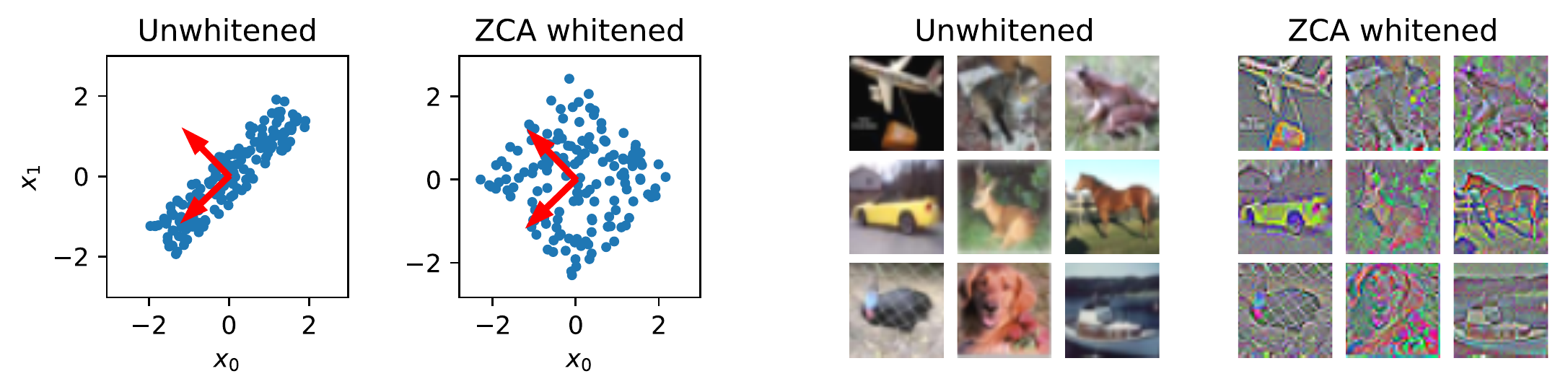}
 \put (0,1) {\textbf{\small(a)}}
 \put (48.5,1) {\textbf{\small(b)}}
\end{overpic}
    \vspace{-2mm}
    \caption{\small
    \textbf{Whitening removes correlations between feature dimensions in a dataset.} 
    Whitening is a linear transformation of a dataset
    that sets all non-zero eigenvalues of the covariance matrix to 1. 
    ZCA whitening is a specific choice of the linear transformation that 
    rescales the data in the directions given by the eigenvectors of the 
    covariance matrix, but without additional rotations or flips. 
    {\em (a)} A toy 2d dataset before and after ZCA whitening. 
    Red arrows indicate the eigenvectors of the covariance matrix of the unwhitened data.
    {\em (b)} ZCA whitening of CIFAR-10 images preserves spatial and chromatic structure, 
    while equalizing the variance across all feature directions.
    }
    \label{fig:whitening_explanation}
\end{center}
\end{figure*}

\subsection{Second Order Optimization Harms Generalization Similarly to Whitening}

Second order optimization algorithms take advantage of information about the curvature of the loss landscape to take a more direct route to a minimum \citep{boyd-vandenberghe, bottou-nocedal-review}.
There are many approaches to second order or quasi-second order optimization \citep{Broyden1970,Fletcher1970,Goldfarb1970,Shanno1970,Dennis1977,Liu1989,Schraudolph2007,Lin2008,Sunehag2009,Martens2010,Byrd2011,Vinyals2011,duchi2011adaptive,Tieleman2012,zeiler2012-adadelta,Hennig2013,Byrd2014,kingma2014adam,sohl2014fast,desjardins2015-PRONG,martens2015optimizing,grosse2016-CNN-KFAC,agarwal2016secondorder,zhang2017-Hessian-free,botev2017-scalable-guass-newton,martens2018kronecker,george2018-eigenvalue-KFAC,lu2018BlockMA,bollapragada2018-lbfgs,gupta2018-shampoo,shazeer2018adafactor,berahas2019quasinewton,anil2019memory,agarwal2019-full-matrix-adaptive,osawa2020}, 
and there is active debate over whether second order optimization harms generalization
\citep{wilson2017marginal,zhang2018three,zhang2019-natural-gd-overparametrized,amari2020does,vaswani2020optimizer}. 
The measure of curvature used in these algorithms is often related to feature-feature covariance matrices of the input or of intermediate activations \citep{martens2015optimizing}. 
In some situations
second order optimization is equivalent to steepest descent training on whitened data \citep{sohl2012natural,martens2015optimizing}.

The similarities between whitening and second order optimization allow us to
argue that pure second order optimization also 
prevents information about the input distribution
from being leveraged during training,
and can harm generalization (see Figs.\,\ref{fig:whitening_and_2nd_order_harm},\,\ref{fig:faster_training}). 
Our results are strongest for unregularized, exact second order optimizers and for the 
large width limit of neural networks.
We do find that
when strongly regularized and carefully tuned, second order methods can lead to superior performance (Fig.~\ref{fig:regularized-second-order}).

\section{Data Dependence of Training Dynamics and Test Predictions}
\label{sec:data-dependence-of-training}

Consider a dataset $X\in\mathbb{R}^{d\times n}$ consisting of $n$ independent 
$d$-dimensional examples. $X$ 
consists of samples
from
an underlying data distribution to which we do not have access.
We write $F$ for the feature-feature second moment matrix and $K$ 
for the sample-sample second moment matrix:
\es{secondmoments}{
F = XX^\top \in\mathbb{R}^{d\times d}\,, \ \ \ 
K =  X^\top X\in\mathbb{R}^{n\times n}\,.
}
We assume that at least one of $F$ or $K$ is full rank.
We omit
normalization factors of $1/n$ and $1/d$ in the definitions of 
$F$ and $K$, respectively, for notational simplicity in later sections.
As defined, $K$ is also the Gram matrix of $X$.

We are interested in understanding the effect of whitening on the 
performance of a trained model when evaluated on a test set.
In this section, we prove the general 
result that for any model with a dense, isotropically initialized first layer, 
the trained model depends on the training inputs only through $K$. 
In Section \ref{sec:whitening-second-order-opt-harm-generalization}, 
we will show that whitening, 
and often second order optimization, 
reduce the information in $K$.
These two results together lead to a conclusion that whitening and second order optimization limit generalization.

\subsection{Training Dynamics Depend on the Training Data Only Through Its Second Moments}
\label{sec:training-depends-on-K}

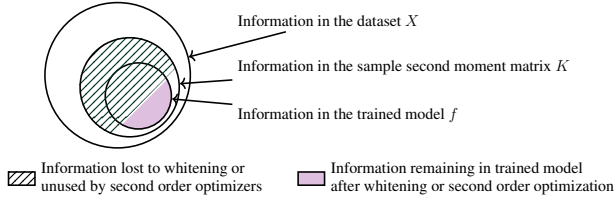
\begin{figure}
    \centering
    \scalebox{0.5}
    {
    \begin{tikzpicture}
    \large 
        \node[circle,very thick, draw, minimum size=50pt,inner sep=0pt] (trained) at (1, -1) {};
            \begin{scope}[on background layer]
          \fill[fill=purple!25] (trained.240) arc [start angle=240, end angle=390, radius=9mm];
        \end{scope}
        
        \node[circle,very thick, draw, semifill={green, ang=23, delta=360-60-75}, minimum size=(75pt,inner sep=0pt] (gram) at (0.77, -0.77) {};
        
        \node[circle,very thick, draw, minimum size=110pt,inner sep=0pt] (dataset) at (0.45, -0.45) {};
        
        \node[rectangle,very thick, draw, pattern=north east lines, pattern color=black, minimum width=20pt, minimum height=10pt,label={[align=left]right:Information lost to whitening or\\ unused by second order optimizers}] (key) at (-2.1, -3.2) {};
        
        \node[rectangle,very thick, draw, fill=purple!25, minimum width=20pt, minimum height=10pt,label={[align=left]right:Information remaining in trained model\\after whitening or second order optimization}] (key) at (5.6, -3.2) {};
        
        \node[right] (dataset_label) at (3.5, 1) {Information in the dataset $X$};
        \node[right] (gram_label) at (3.5, -0.25) {Information in the sample second moment matrix $K$};
        \node[right] (trained_label) at (3.5, -1.5) {Information in the trained model $f$};
        \draw[very thick, ->] (trained_label) -> (trained);
        \draw[very thick, ->] (gram_label) -> (gram);
        \draw[very thick, ->] (dataset_label) -> (dataset);
    \end{tikzpicture}
    }
    \vspace{-7mm}
        \caption{\textbf{A Venn diagram summarizing the core result.} 
        Only information contained in a dataset's sample second moment matrix $K$ is even potentially available to inform a model's predictions, for any model with a dense, isotropically initialized first layer. 
        Trained models learn a subset of the information contained in $K$. 
        Both data whitening and second order optimization make information in $K$ unavailable. 
        This reduces the information about the dataset available to the model, and often harms generalization.
        In some cases all information in $K$ is rendered unavailable, in which case the trained model is completely unable to generalize.
    }
    \label{fig:my_label}
\end{figure}

\begin{figure*}[t]
\centering

    \scalebox{0.85}
    {\textbf{(a)}\hspace{-4mm}\begin{tikzpicture}
        \node (x) at (0,0) {$X$};
    
        \node (z) [above =of x] {$Z=WX$};
        
        \node (w) [left =of z] {$W$};
    
        \node (f) [above =of z] {$f(X)=g_{\theta}(Z)$};
        
        \node (theta) [left =of f] {$\theta$};
    
        \path (x) edge (z);
        \path (z) edge (f);
        \path (w) edge (z);
        \path (theta) edge (f);
    \end{tikzpicture}
    }
    \scalebox{0.85}
    {\textbf{(b)}\ \ \begin{tikzpicture}
    
        \node (s) at (0,0) {$K_{train}$};
        
        \node (z_t) [above =of s] {$Z_{train}^{t}$};
        \node (theta_t) [above =of z_t] {$\theta^{t}$};
        \node (y) [above =of theta_t] {$Y_{train}$};
        
        \node (z_t+1) [right =of z_t] {$Z_{train}^{t+1}$};
        \node (theta_t+1) [above =of z_t+1] {$\theta^{t+1}$};
        
        \node (f_t+1) at (4,2.5) {$f^{t+1}$};
        
        \path (y) edge (theta_t+1);
        \path (theta_t) edge (theta_t+1);
        \path (z_t) edge (theta_t+1);
        
        \path (theta_t) edge (z_t+1);
        \path (z_t) edge (z_t+1);
        \path (s) edge (z_t+1);
        
        \path (theta_t+1) edge (f_t+1);
        \path (z_t+1) edge (f_t+1);
        
        \path (y) edge (z_t+1);
    \end{tikzpicture}
    }
    \scalebox{0.85}
    {
    \textbf{(c)}\ \  \begin{tikzpicture}
        \node (x) at (0,0) {$X_{train}$};

        \node (s) [above =of x] {$K_{train}$};
        
        \node (z0) [above =of s] {$Z_{train}^{0}$};
        \node (z1) [right =of z0] {$Z_{train}^{1}$};
        \node [purple](zt) [right =of z1] {$Z_{train}^{t}$};
        
        \node (theta0) [above =of z0] {$\theta^{0}$};
        \node (theta1) [above =of z1] {$\theta^{1}$};
        \node [purple](thetat) [above =of zt] {$\theta^{t}$};
        
        \node (y) [above =of theta0] {$Y_{train}$};
        
        \node [purple](ftest) [above =of thetat] {$f_{test}^t$};
        
        \node [purple](s_train_test) [below =of zt] {$K_{train\times test}$};
    
        \path (x) edge (s);
        
        \path (s) edge (z0);
        \path (s) edge (z1);
        
        \path (z0) edge (z1);
        
        \path (theta0) edge (theta1);
    
        \path (y) edge (theta1);
        
        \path (y) edge (z1);
        
        \path (theta0) edge (z1);
        
        \path (z0) edge (theta1);
        
        \path [purple](zt) edge[bend right] (ftest);
        \path [purple](thetat) edge (ftest);
        \path [purple](s_train_test) edge[bend right] (ftest);
        \path [purple](s) edge[bend right] (zt);
        \path [purple](y) edge[bend left] (thetat);
        \path [purple](y) edge[bend left] (zt);
        
        \path (theta1) -- node[auto=false]{\ldots} (thetat);
        \path (z1) -- node[auto=false]{\ldots} (zt);
    \end{tikzpicture}
    }
    \caption{\small
    \textbf{Model activations and parameters depend on the training data only through second moments. }
    {\em (a)} Our model class consists of a linear transformation $Z = WX$, followed by a nonlinear map $\gtheta{Z}$ with parameters $\theta$. Note that this model class includes fully connected neural networks, among other common machine learning models. 
    {\em (b)} Causal dependencies for a single gradient descent update. The changes in weights, activations, and model output depend on the training data through the training sample second moment matrix, $\Ktr$, and the targets, $\Ytr$.
    {\em (c)} Causal structure for the entire training trajectory. The final weights and training activations only depend on the training data through the training sample second moment matrix $\Ktr$, and the targets $\Ytr$, while the test predictions (in purple) also depend on the mixed second moment matrix, $\Kmix$. 
    }
    \label{fig:causal_diagrams}
\end{figure*}
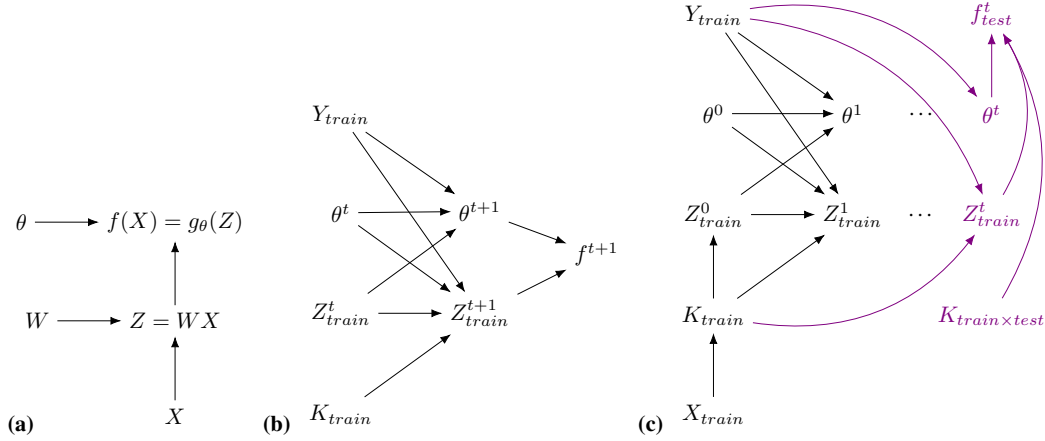

Consider a model $f$ with a dense first layer $Z$:
\es{eq:model}{
f(X)=g_{\theta}(Z)\,, \ \ \ Z=WX\,,
}
where $W$ denotes the first layer weights and $\theta$ denotes all remaining parameters (see Fig.~\ref{fig:causal_diagrams}(a)). 
The structure of $g_\theta\pp{\cdot}$ is unrestricted. 
$W$ is initialized from an
isotropic distribution.
We study a supervised learning problem, in which each vector $X_i$
corresponds to a label $Y_i$.\footnote{Our results also apply to unsupervised learning, which can be viewed as a special case of supervised learning where $Y_i$ contains no information.}
We adopt the notation $\Xtr\in\mathbb{R}^{d\times \ntr}$ and $\Ytr$ for the training inputs and labels, and write the corresponding second moment matrices as $\Ftr$ and $\Ktr$.
We consider models with loss $L(f(X);Y)$ trained by SGD. The update rules are
\begin{subequations}\label{eq:update_rule_weights}
\begin{align}
    \theta^{t+1}&=\theta^{t}-\eta\frac{\partial L^{t}}{\partial \theta^{t}},\\
    W^{t+1}&=W^{t}-\eta\frac{\partial L^{t}}{\partial W^{t}} = W^{t}-\eta\pd{L^{t}}{\Ztr^{t}} \Xtr^\top \, ,
\end{align}
\end{subequations}
where $t$ denotes the current training step, $\eta$ is the learning rate, and $L^t$ is the loss evaluated only on the minibatch used at step $t$. 
As a result, the activations $\Ztr=W\Xtr$ evolve as
\begin{align}
    \Ztr^{t+1} 
    =
    \Ztr^{t}-\eta \pd{L^{t}}{\Ztr^{t}}\Ktr.
    \label{eq:update_rule_Z}
\end{align}

Treating the weights, activations, and function predictions as random variables, with distributions induced by the initial distribution over $W^{0}$, the update
rules (Eqs.\,\ref{eq:update_rule_weights}-\ref{eq:update_rule_Z}) can be represented by the causal diagram in Fig.~\ref{fig:causal_diagrams}(b).
We can now state one of our main results.

\begin{theorem}\label{thm:mi}
Let $f(X)$ be a function as in Eq.\,\ref{eq:model}, 
with linear first layer $Z = WX$, and additional parameters $\theta$.
Let $W$ be initialized from an isotropic distribution. 
Further, let $f(X)$ be trained via gradient descent on a training dataset $X_{\textrm{train}}$. The learned weights $\theta^{t}$ and first layer activations $\Ztr^{t}$ are independent of $X_\textrm{train}$ conditioned on $K_{\textrm{train}}$ and $\Ytr$. In terms of mutual information $\mathcal{I}$, we have
\begin{align}\label{eq:mi_train}
    \mathcal{I}(\Ztr^{t},\theta^{t};X_{\textrm{train}}\mid \Ktr ,\Ytr ) = 0 \ \forall  t.
\end{align}
\end{theorem}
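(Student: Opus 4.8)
The plan is to prove something slightly stronger than Eq.~\ref{eq:mi_train}: that the (regular) conditional distribution of the pair $(\Ztr^t,\theta^t)$ given $\Xtr$ and $\Ytr$ depends on $(\Xtr,\Ytr)$ only through $(\Ktr,\Ytr)$. This is equivalent to the stated conditional independence --- and hence to the vanishing of the conditional mutual information --- but it is the form amenable to a clean induction on the training step $t$.

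First I would dispatch the base case $t=0$. Here $\theta^0$ is drawn from its initialization independently of the data, so it carries no information about $\Xtr$; the only data-dependent quantity is $\Ztr^0 = W^0\Xtr$. The key point is that an isotropic initialization means the law of $W^0$ is invariant under right multiplication by orthogonal matrices, $W^0 \stackrel{d}{=} W^0 Q$ for all $Q\in O(d)$ (e.g.\ i.i.d.\ Gaussian entries have this property). If $\Xtr'$ is any other input matrix with the same Gram matrix, $\Xtr'^\top\Xtr' = \Ktr$, then $\Xtr' = Q\Xtr$ for some $Q\in O(d)$, so $W^0\Xtr' = W^0 Q\Xtr \stackrel{d}{=} W^0\Xtr$. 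Thus the conditional law of $\Ztr^0$ given $\Xtr$ is a function of $\Ktr$ alone, establishing the claim at $t=0$.

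For the inductive step I would read the dependencies straight off the update rules (Eqs.~\ref{eq:update_rule_weights}--\ref{eq:update_rule_Z}): since $f(\Xtr)=\gtheta{\Ztr}$, both $\partial L^t/\partial\theta^t$ and $\partial L^t/\partial\Ztr^t$ are determined by $(\Ztr^t,\theta^t,\Ytr)$, so the new state satisfies $(\Ztr^{t+1},\theta^{t+1}) = \Phi(\Ztr^t,\theta^t,\Ktr,\Ytr)$ for a fixed measurable map $\Phi$ --- gradient descent is deterministic given the current state, $\Ktr$, and the targets. Hence the conditional law of $(\Ztr^{t+1},\theta^{t+1})$ given $(\Xtr,\Ytr)$ is the pushforward under $\Phi(\,\cdot\,,\,\cdot\,,\Ktr,\Ytr)$ of the conditional law of $(\Ztr^{t},\theta^{t})$ given $(\Xtr,\Ytr)$; by the inductive hypothesis the latter depends on the data only through $(\Ktr,\Ytr)$, and therefore so does the former. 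This closes the induction, and the statement for all $t$ is exactly the asserted conditional independence, so the conditional mutual information vanishes. (For minibatch SGD one appends the sequence of minibatch index sets, which is sampled independently of the data; conditioning on it leaves the argument intact, and averaging over it preserves conditional independence.)

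The main obstacle is making the base case airtight, not the induction. Two facts need care: (i) that $\Xtr^\top\Xtr = \Xtr'^\top\Xtr'$ forces $\Xtr' = Q\Xtr$ for some orthogonal $Q$ --- true because the pairwise inner products of the columns pin down the point configuration in $\R{d}$ up to a rigid rotation, which can be extended arbitrarily off the column span; and (ii) that the operative notion of ``isotropic'' is precisely invariance of the law of $W^0$ under $W^0\mapsto W^0Q$. A weaker reading (e.g.\ merely $\expect{}{ww^\top}\propto I$ for each row $w$) would control only second moments of $\Ztr^0$ and would not in general make the mutual information vanish exactly; I would state the needed invariance explicitly. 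Beyond that the argument is measure-theoretic bookkeeping, whose only substantive input is that the $Z$-update factors through $\Ktr$, which is immediate from Eq.~\ref{eq:update_rule_Z}.
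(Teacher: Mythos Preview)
Your proposal is correct and essentially matches the paper's approach: the paper establishes the base case from isotropy (your argument that $\Xtr'=Q\Xtr$ and $W^0Q\stackrel{d}{=}W^0$ is precisely the content of Appendix~\ref{sec:isotropy-explanation}) and then either reads the conditional independence off a causal diagram or, in Appendix~\ref{app mut inf proof}, proceeds by induction on $t$ via the update rules, exactly as you do. Your write-up is a bit more explicit about the deterministic transition map $\Phi$ and about the precise right-$O(d)$-invariance needed of the initialization, but the substance is the same.
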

\begin{proof}
To establish this result, we note that the first layer activation at initialization, $\Ztr^{0}$, is a random variable due to random weight initialization, and only depends on $X_{\textrm{train}}$ through $K_{\textrm{train}}$: 
\begin{equation}\label{eq:MIInitIso}
    \mathcal I(\Ztr^0; \Xtr\mid \Ktr) = 0.
\end{equation}
This is a consequence of the isotropy of the initial weight distribution, explained in detail in Appendix\,\ref{sec:isotropy-explanation}. 
Note also that the deeper layer weights at initialization are independent of $\Xtr$:
\begin{equation}
    \mathcal I(\theta^0; \Xtr) = 0.
\end{equation}
Combining these with Eqs.\,\ref{eq:update_rule_weights}-\ref{eq:update_rule_Z}, the causal diagram for all of training is given by (the black part of) Fig.\,\ref{fig:causal_diagrams}(c). The conditional independence of Eq.\,\ref{eq:mi_train} follows from this diagram.

\end{proof}

An alternate proof, by induction rather than using the causal diagram, is presented in Appendix \ref{app mut inf proof}.

\subsection{Test Set Predictions Depend on Train and Test Inputs Only Through Their Second Moments}
\label{sec:test-preds-depend-on-K}

Let $\Xtest\in\mathbb{R}^{d\times\ntest}$ and $\Ytest$ be the test data. The test predictions $\ftest=f(\Xtest)$ are determined by $\Ztest^{t}=W^{t}\Xtest$ and $\theta^{t}$. 
To identify sources of data dependence, we can write the evolution of the test set predictions $\Ztest$ over the course of training in a manner similar to Eq.\,\ref{eq:update_rule_Z}:
\begin{align}\label{eq:update_rule_Ztest}
    \Ztest^{t+1}&=\Ztest^{t}-\eta \frac{\partial L^{t}}{\partial \Ztr^{t}}\Kmix,
\end{align}
where $\Kmix=\Xtr^{\top}\Xtest\in\mathbb{R}^{\ntr\times\ntest}$. The initial first layer activations are independent of the training data, and depend on $\Xtest$ only through $\Ktest$:
\begin{align}\label{eq:warmupmiinit}
    \mathcal{I}(\Ztest^0; X\mid \Ktest) = 0,
\end{align}
where $X$ is the combined training and test data. If we denote the second moment matrix over this combined set by $K$, then the evolution of the test predictions is described by the (purple part of the) causal diagram in Fig.\,\ref{fig:causal_diagrams}(c), 
from which we conclude the following.
\begin{theorem}\label{thm:test-prediction-K-dependence}
For a function $f(X)$ as in Eq.\,\ref{eq:model}, with first-layer weights initialized isotropically, trained with the update rules Eqs.\,\ref{eq:update_rule_weights}-\ref{eq:update_rule_Z},
test predictions depend on the training data only through $K$ and $\Ytr$. This is summarized in the mutual information statement
\begin{align}\label{eq:warmupmipred}
    \mathcal{I}(\ftest;X \mid K,\Ytr)=0\,.
\end{align}
\end{theorem}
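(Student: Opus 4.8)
The plan is to mirror the proof of Theorem~\ref{thm:mi}, extending the inductive argument so that it also tracks the test activations. Since the test prediction $\ftest = \gtheta{\Ztest^t}$ is a deterministic function of $\theta^t$ and $\Ztest^t$, the data processing inequality reduces Eq.~\ref{eq:warmupmipred} to the claim that the pair $(\theta^t,\Ztest^t)$ is conditionally independent of the combined input matrix $X$ given $K$ and $\Ytr$. I would therefore carry the joint trajectory $(\theta^t,\Ztr^t,\Ztest^t)$ along, and show it is a deterministic function of quantities that are either independent of $X$ given $K$, or are themselves functions of $K$ and $\Ytr$.

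\textbf{Base case.} The deeper-layer weights satisfy $\mathcal{I}(\theta^0;X)=0$, and the combined first-layer activations at initialization $Z^0 = W^0 X = [\Ztr^0,\Ztest^0]$ satisfy $\mathcal{I}(Z^0;X\mid K)=0$; the latter is Eq.~\ref{eq:warmupmiinit}, i.e.\ the isotropy statement Eq.~\ref{eq:MIInitIso} applied to the union of the training and test points rather than to the training points alone. Because the initialization of $W^0$ and $\theta^0$ is independent of every data-derived quantity --- inputs and labels alike --- the pair $(\theta^0, Z^0)$ is jointly independent of $X$ conditioned on $(K,\Ytr)$.

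\textbf{Inductive step.} From the update rules Eqs.~\ref{eq:update_rule_weights}--\ref{eq:update_rule_Z} together with Eq.~\ref{eq:update_rule_Ztest}, the triple $(\theta^{t+1},\Ztr^{t+1},\Ztest^{t+1})$ is a deterministic function of $(\theta^{t},\Ztr^{t},\Ztest^{t})$, of $\Ktr$ and $\Kmix$ --- both of which are blocks of $K$ --- and of $\Ytr$. The one thing to check here is that the backpropagated signal $\pd{L^t}{\Ztr^t}$ is itself a function of $\theta^t$, $\Ztr^t$ and $\Ytr$ only, with no additional input dependence; this holds provided (as assumed in the setup, and as in Theorem~\ref{thm:mi}) the minibatch schedule is chosen independently of the data. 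Chaining from $t=0$, the entire trajectory $(\theta^t,\Ztr^t,\Ztest^t)$ is a deterministic function of $(\theta^0, Z^0, K, \Ytr)$. Conditioning on $(K,\Ytr)$ freezes everything except $(\theta^0, Z^0)$, which the base case showed is independent of $X$; hence $(\theta^t,\Ztest^t)\perp X \mid K,\Ytr$, so $\ftest \perp X \mid K,\Ytr$, which is Eq.~\ref{eq:warmupmipred}. This is precisely what the purple part of the causal diagram in Fig.~\ref{fig:causal_diagrams}(c) expresses.

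\textbf{Main obstacle.} The only nontrivial input is Eq.~\ref{eq:warmupmiinit}: conditioning on the Gram matrix of the combined dataset must render $Z^0$ independent of $X$. The argument is that two input matrices with the same Gram matrix differ by left multiplication by an orthogonal $d\times d$ matrix, so the law of $W^0 X$, with $W^0$ isotropic, depends on $X$ only through $X^\top X$; rank-deficient inputs (the reason for assuming at least one of $F$, $K$ is full rank) require a little extra care. This is exactly the computation already carried out for the training set in Appendix~\ref{sec:isotropy-explanation}, now applied verbatim to the union of train and test points, so no genuinely new work is needed --- the whole step is bookkeeping layered on top of Theorem~\ref{thm:mi}.
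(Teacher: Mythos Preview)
Your proposal is correct and is essentially the paper's own argument: the paper establishes Eq.~\ref{eq:warmupmiinit} and Eq.~\ref{eq:update_rule_Ztest}, then reads off the conditional independence from the purple part of the causal diagram in Fig.~\ref{fig:causal_diagrams}(c), which your inductive argument unwinds step by step. Since the paper already presents the inductive variant as an alternative proof of Theorem~\ref{thm:mi} (Appendix~\ref{app mut inf proof}), your extension of that induction to the triple $(\theta^t,\Ztr^t,\Ztest^t)$ is exactly the analogous alternative proof here, not a genuinely different route.
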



\section{Whitening, Second Order Optimization, and Generalization}
\label{sec:whitening-second-order-opt-harm-generalization}

In Section \ref{sec:data-dependence-of-training}, we established that trained models with a fully connected,
isotropically initialized first layer depend on the input data only through $K$.
In this section we show that by removing information from $F$, whitening 
removes
information in $K$ that could otherwise be used to generalize. 
In the extreme case $n\leq d$, $K$ is trivialized, and we show that 
any generalization ability in a model trained in this regime relies solely on
linear
interpolation
between inputs.
We offer a detailed theoretical study of these effects in a linear model, and we 
use this example to make a connection with unregularized second order optimization.

We begin with the definition of whitening.

\begin{definition}[Whitening]
\label{def:whitening}
Any linear transformation $M$ s.t. $\hat{X}= MX$ maps the eigenspectrum of $F$ to ones 
and zeros, with the multiplicity of ones given by $\operatorname{rank}(F)$. 
\end{definition}

It is natural to consider the two cases $n \leq d$ and $n \geq d$ 
(when $n=d$ both cases apply).
\es{whitening}{
n\geq d : \ \hat{F}&=I^{d\times d}\,, \ \ \  \hat{K}\,=\,\sum_{i=1}^{d} \hat{u}_{i}\hat{u}^\top_{i}. \\ 
n\leq d : \ \hat{F}&= \sum_{j=1}^{n} \hat{v}_{j}\hat{v}^\top_{j}\,,  \ \ \ \hat{K}\,=\,I^{n\times n}.
}
Here, $\hat{F}$ and $\hat{K}$ denote the whitened second moment matrices, and the vectors $\hat{u}_{i}$ and $\hat{v}_{j}$ are orthogonal unit vectors of dimension $n$ and $d$ respectively.
Eq.\,\ref{whitening} follows directly from the fact that $X^\top X$ and $XX^\top$ share nonzero eigenvalues.

\subsection{Whitening Harms Generalization}
\label{sec:whitening-harms-generalization}

\subsubsection{Full data whitening of a high dimensional dataset} 
We first consider a simplified setup: computing the whitening transform using the combined training and test data. We refer to this as `full-whitening'. We consider the large feature count ($d \geq n$) regime.
\begin{corollary}
When $d \geq n$, and when the whitening transform is computed on the full input dataset $X$ (including both train and test points), then the whitened input data $\hat{X}$ provides no information about the predictions $\ftest$ of the model on test points. That is,
\begin{align}\label{eq:warmupmipred_white}
    \mathcal{I}(\ftest;\hat{X} \mid \Ytr)=0\,.
\end{align}
\end{corollary}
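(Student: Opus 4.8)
The plan is to apply Theorem~\ref{thm:test-prediction-K-dependence} to the \emph{whitened} dataset and then to exploit the fact that, when $d \geq n$, whitening makes the sample--sample second moment matrix of the whitened data completely trivial. Write $\hat X = MX$ for a whitening map $M$ as in Definition~\ref{def:whitening}, where $X$ is the combined training and test data ($n$ columns in total), and let $\hat K = \hat X^{\top}\hat X$ be the second moment matrix of the whitened combined inputs. Since $\hat X$ is obtained from $X$ by a fixed linear transformation, the model $f(\hat X)=g_\theta(W\hat X)$ is again of the form Eq.~\ref{eq:model}, with a dense first layer whose weights are initialized isotropically, trained by the update rules Eqs.~\ref{eq:update_rule_weights}--\ref{eq:update_rule_Z} with $X$ replaced by $\hat X$ everywhere. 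Theorem~\ref{thm:test-prediction-K-dependence} therefore applies verbatim to this model and gives
\[
\mathcal{I}(\ftest;\hat X \mid \hat K,\Ytr)=0 .
\]

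It then remains to remove the conditioning on $\hat K$. Because $d \geq n$, the $n\leq d$ branch of Eq.~\ref{whitening} applies to the whitened combined dataset and yields $\hat K = I^{n\times n}$: whitening forces every nonzero eigenvalue of $\hat F = MXX^{\top}M^{\top}$ to equal $1$, there are $\operatorname{rank}(F)$ of them, and the standing assumption that at least one of $F$, $K$ is full rank, together with $d\geq n$, forces $\operatorname{rank}(X)=n$, so $\hat F$ has exactly $n$ unit eigenvalues; since the $n\times n$ positive semidefinite matrix $\hat K$ shares precisely the nonzero spectrum of $\hat F$, it must be $I^{n\times n}$. Thus $\hat K$ is a deterministic constant (the identity) for every realization of the data and every draw of the initialization, and conditioning on a constant random variable does not change mutual information, so
\[
\mathcal{I}(\ftest;\hat X \mid \Ytr)=\mathcal{I}(\ftest;\hat X \mid \hat K,\Ytr)=0,
\]
which is Eq.~\ref{eq:warmupmipred_white}.

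Essentially all of the content is in the identity $\hat K = I^{n\times n}$; once that is in hand the corollary is immediate from Theorem~\ref{thm:test-prediction-K-dependence}. The step I expect to need the most care is the rank bookkeeping behind this identity: checking that, no matter which whitening map $M$ is used, the $n\times n$ Gram matrix of the whitened data collapses \emph{all the way} to the identity, which relies on the shared nonzero spectrum of $X^{\top}X$ and $XX^{\top}$ and on the full-rank hypothesis (this is precisely the content of Eq.~\ref{whitening}). One should also be explicit that `$\ftest$' here denotes the output of the model that is both trained on whitened training inputs and evaluated on whitened test inputs, and that it is essential that $M$ is computed from the \emph{combined} dataset: whitening using only the training inputs would leave the train--test block of the whitened Gram matrix nontrivial, and the argument would not go through.
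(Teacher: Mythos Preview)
Your proposal is correct and follows essentially the same approach as the paper: apply Theorem~\ref{thm:test-prediction-K-dependence} to the whitened data, then use Eq.~\ref{whitening} to conclude $\hat K = I^{n\times n}$ when $d\geq n$, so that conditioning on $\hat K$ is vacuous. The paper's proof is two lines to the same effect; your additional rank bookkeeping and the remark that $M$ must be computed on the combined dataset are accurate elaborations rather than a different argument.
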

\begin{proof}
By Eq.\,\ref{whitening} we have $\hat{K}=I$. Since $\hat{K}$ is now a constant rather than a random variable, Eq.\,\ref{eq:warmupmipred} simplifies directly to Eq.\,\ref{eq:warmupmipred_white}.
\end{proof}


To further clarify this prediction, note that Eq.\,\ref{eq:warmupmipred_white} implies $\mathcal{I}(\ftest;\Ytest \mid \Ytr)=0$ for fully-whitened data because the true test labels are solely determined by $\Xtest$. Therefore \emph{knowing the model 
prediction on a test point in this setting gives no information about the true test label}.

\subsubsection{Training data whitening of a high dimensional dataset}
In practice, we are more interested in the common setting of computing a whitening transform 
based only on the training data. We call data whitened in this way `train-whitened'.
As mentioned above, the test predictions of a model are entirely determined by the first layer activations $Z^{t}_{\textrm{test}}$ and the deeper layer weights $\theta^{t}$.
From Theorem~\ref{thm:mi} we see that the learned weights $\theta^{t}$ depend on the training data only through $\Ktr$, and are thus independent of the training data for whitened data:
\begin{align}\label{eq:mi_theta}
    \mathcal{I}(\theta^{t};\hat{X}_{\textrm{train}}\mid \Ytr ) = 0\,.
\end{align}
It is worth emphasizing this point because in most realistic networks the majority of model parameters are contained in these deeper weights $\theta^{t}$.

Despite the deep layer weights, $\theta^{t}$,  being unable to 
extract information from the training distribution, the model is not entirely incapable of generalizing to test inputs.
This is because the test activations $\Ztest$ will interpolate between training examples, using the information in $\Kmixhat$. More precisely,
\begin{align}
    \Ztest^{t} &= \Ztest^0 + \pp{\Ztr^{t} - \Ztr^0} \Kmixhat.
\end{align}
This interpolation in $Z$ is the {\em only} way in which structure in the inputs $\Xtr$ can drive generalization. This should be contrasted with the case of full-whitening, discussed above, where $\Kmixhat=0$.
We therefore predict that when whitening is performed only on the training data, there will be some generalization, but it will be much more limited than can be achieved without whitening.

\subsubsection{Full data whitening of lower dimensional datasets}
When the dataset size is larger than the data dimensionality, whitening continues to remove information which could otherwise be used for generalization, but it no longer removes \emph{all} of the information in the training inputs. In this regime, by mapping the feature-feature second moment matrix $F$ to the identity matrix, whitening also reduces the degrees of freedom in the sample-sample second moment matrix $K$. Because information about the training dataset is available to the model only through $K$ (Fig.\,\ref{fig:causal_diagrams}(c)), reducing the degrees of freedom of $K$ also reduces the information available to the model about the training inputs.

\begin{theorem}\label{thm:reduced-dimensionality}
Consider a dataset $X \in \mathbb{R}^{d \times n}$, with $n > d$, and where all submatrices formed from $d$ columns of $X$ are full rank (this condition holds in the generic case). 
Consider the same model class and training procedure as in Theorem \ref{thm:mi}. 
Any dataset $X$ can be compressed to $c \leq nd$ scalar values without losing any of the information that determines the distribution over the test set predictions $f_\textrm{test}$ of the trained model. 
When models are trained on unwhitened data, then $c = \min\pp{ n d-\nicefrac{\pp{d^{2}-d}}{2}, \nicefrac{\pp{n^2 + n}}{2} }$. 
However, when models are trained on whitened data, then the whitened dataset can be further compressed to $\hat{c} \leq (n-d) d$ scalars.
\end{theorem}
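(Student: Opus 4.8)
The plan is to combine Theorem~\ref{thm:test-prediction-K-dependence} with an elementary parameter count for the sample second moment matrix. By Theorem~\ref{thm:test-prediction-K-dependence} the distribution of $\ftest$ depends on the inputs only through the combined Gram matrix and $\Ytr$, hence on the training inputs only through $\Ktr = \Xtr^\top \Xtr$ and $\Kmix = \Xtr^\top \Xtest$. Under the genericity hypothesis $\Xtr$ has rank $d$, so its columns span $\mathbb{R}^d$ and any test matrix can be written $\Xtest = \Xtr A$; then $\Kmix = \Ktr A$, so $\Ktr$ alone captures all of the dependence of $\ftest$ on $\Xtr$ (equivalently: since $W^0$ is isotropic, the joint distribution of the test predictions is unchanged under $(\Xtr,\Xtest) \mapsto (O\Xtr, O\Xtest)$ for orthogonal $O$, so only the $O(d)$-orbit of $\Xtr$ — faithfully recorded by $\Ktr$ — can matter). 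It therefore suffices to bound the number of real scalars needed to specify $\Ktr$, and, in the whitened case, $\Ktrhat$.

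For the unwhitened case I would give the bound two ways. Trivially, $\Ktr$ is an $n \times n$ symmetric matrix, so $\binom{n+1}{2} = \tfrac{n^2+n}{2}$ scalars suffice. Better, exploit $\operatorname{rank}(\Ktr) = d$: reorder columns so that the first $d$ columns of $\Xtr$ form an invertible $d\times d$ matrix $\Xtr^{(1)}$ (possible by the hypothesis that every $d$-column submatrix is full rank) and write $\Xtr = \Xtr^{(1)}\,[\,I_d \mid B\,]$ with $B \in \mathbb{R}^{d\times(n-d)}$; then $\Ktr = [\,I_d \mid B\,]^\top G\,[\,I_d \mid B\,]$ where $G = (\Xtr^{(1)})^\top \Xtr^{(1)}$ is a positive semidefinite $d\times d$ matrix. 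The pair $(G,B)$ has $\binom{d+1}{2} + d(n-d) = nd - \tfrac{d^2-d}{2}$ real entries and determines $\Ktr$, so $c \le \min\pp{nd - \tfrac{d^2-d}{2},\ \tfrac{n^2+n}{2}}$, matching the claimed value.

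For the whitened case, the new ingredient is that with $n \ge d$ whitening makes $\hat F = I_d$, i.e.\ the rows of $\hat X$ are orthonormal and (by Eq.~\ref{whitening}) $\Ktrhat$ is the orthogonal projector onto the $d$-dimensional row space of $\hat X$. Running the same construction with $\hat X$ in place of $\Xtr$, the constraint $\hat X \hat X^\top = I_d$ becomes $\hat X^{(1)}(I_d + BB^\top)(\hat X^{(1)})^\top = I_d$, i.e.\ $G = (I_d + BB^\top)^{-1}$: now $G$ is a function of $B$, so $\Ktrhat$ is determined by the $d(n-d)$ entries of $B$ alone, giving $\hat c \le (n-d)d$. (Invariantly, rank-$d$ orthogonal projectors in $\mathbb{R}^{n\times n}$ are parametrized by the Grassmannian $\operatorname{Gr}(d,n)$, of dimension $d(n-d)$.)

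The main obstacle is conceptual bookkeeping rather than calculation: making precise the sense in which ``the information in $\Xtr$ relevant to $\ftest$'' is exactly $\Ktr$ — i.e.\ handling $\Kmix$ and the test data cleanly (the reduction is valid once test inputs are read relative to $\Xtr$, equivalently once one allows a common orthogonal change of input coordinates, which acts trivially on $\ftest$ by isotropy of $W^0$, just as in Theorem~\ref{thm:mi}). After that the counting is routine; the genericity hypothesis is used exactly to guarantee both $\operatorname{rank}(\Ktr) = d$ and the existence of an invertible $\Xtr^{(1)}$ so the explicit $(G,B)$ parametrization makes sense, and in the whitened case one additionally verifies the identity $G = (I_d + BB^\top)^{-1}$.
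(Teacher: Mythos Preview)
Your approach is correct and close in spirit to the paper's, but the presentation differs in two notable ways.

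First, the paper works throughout with the \emph{full} sample second moment matrix $K = X^\top X$ (train and test together), invoking Theorem~\ref{thm:test-prediction-K-dependence} directly, rather than reducing to $\Ktr$ and then arguing separately about $\Kmix$. Your detour through ``$\Xtest = \Xtr A$, hence $\Kmix = \Ktr A$'' is unnecessary and, as you yourself flag, makes the bookkeeping murkier: the statement is about compressing the whole $X$, and the paper just counts degrees of freedom in the full $K$.

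Second, for the actual parameter counts the paper is less explicit than you. In the unwhitened case it simply states that the inner products of $n$ vectors in $\mathbb{R}^d$ are determined by $n$ magnitudes and $n(d-1) - d(d-1)/2$ independent angles; in the whitened case it uses the pseudoinverse identity $\hat K = \hat X^+ \hat X$, then sets $\widetilde X = Q\hat X$ with $Q$ chosen to make the first $d$ columns the identity, and observes that $\hat K = \widetilde X^+ \widetilde X$ depends only on the remaining $(n-d)d$ entries of $\widetilde X$. Your unified $(G,B)$ parametrization --- with $G = (\Xtr^{(1)})^\top \Xtr^{(1)}$ free in the unwhitened case and forced to $(I_d + BB^\top)^{-1}$ in the whitened case --- is exactly the same construction (your $B$ is the paper's last $n-d$ columns of $\widetilde X$, and $\widetilde X^+\widetilde X = [I_d\mid B]^\top(I_d+BB^\top)^{-1}[I_d\mid B]$), but packaged more uniformly. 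The Grassmannian remark is a nice invariant gloss the paper does not make.
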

Data whitening therefore reduces the amount of information about the input data that can be used to generate model predictions. See Appendix~\ref{sec:reduced-dimensionality_proof} for a proof of Theorem~\ref{thm:reduced-dimensionality}.

\subsubsection{Summary of predictions}
In a model with a fully connected first layer, with first layer weights initialized
from an isotropic distribution, 
whitening the data before training with SGD is expected to result in reduced
generalization ability compared to the same model trained on unwhitened data.
The severity of the effect varies with the relationship of $n$ to $d$.

Full data whitening when $n<d$ is a limiting case in which generalization is expected
to be completely destroyed.
When $n\leq d$ and the data is train-whitened, 
generalization is forced to rely solely on interpolation and is expected to be poor.
When $n>d$ and the data is either fully or train-whitened, model predictions still depend 
on strictly less information than would be available had the data not been whitened, and once again
generalization is expected to suffer. For $n\gg d$, the effect of whitening on generalization is expected to be minimal.

As we discuss in Section \ref{section:newton-and-wsgd}, these same predictions apply to second order optimization of linear models and of overparameterized networks (with $d$ corresponding the number of parameters rather than the number of input dimensions).

\subsection{Whitening in Linear Least Squares Models}

Due to the fact that they are exactly solvable, 
linear models $f=WX$ provide intuition for \emph{why} whitening can be harmful
as we proved in the last section.
We discuss this intuition briefly here. 
A detailed exposition is in Appendix\,\ref{sec:whitening_linear_models}.

Consider the low dimensional case $d<n$, where the loss has a unique global optimum $W^{\star}$. 
The model predictions at this optimum are invariant to whitening. However, whitening has an effect on the \emph{dynamics} of model predictions over the course of training. 
When, as is typical in real-world problems,
training is performed with early stopping based on validation loss, predictions differ considerably for models trained on whitened and unwhitened data. These benefits from early stopping can be related to benefits from weight regularization \citep{yao2007early}.

We focus on the continuous-time 
picture because it is the clearest, but similar statements can be made for gradient descent.
Recall that $v_{i}$ are the eigenvectors of $\Ftr$. 
Denoting the corresponding eigenvalues by $\lambda_{i}$, the dynamics of $W$ under gradient flow for a mean squared loss are given by the decomposition
\begin{align}
    &W(t)=\sum_{i=1}^{d}v_{i}w_{i}(t),~\text{where}
    \label{eq:weight_decomp_in_F_space}\\
    &w_{i}(t)=e^{-t \lambda_{i}}w_{i}(0)+(1-e^{-\lambda_{i}t})w^{\star}_{i}.
    \nonumber
\end{align}
Eq.\,\ref{eq:weight_decomp_in_F_space} shows that larger principal components of the data are 
learned faster than smaller ones.
Whitening destroys this hierarchy by setting $\lambda_{i}=1\,\forall i$.
If, for example, the data has a simplicity bias (large principal components correspond to signal and small ones correspond to noise), 
whitening forces the learning algorithm to
fit signal and noise directions simultaneously, which results in poorer generalization at finite times
during training than would be observed without whitening.

\begin{figure*}[t!]
\begin{center}
\begin{overpic}[width=0.9\linewidth]{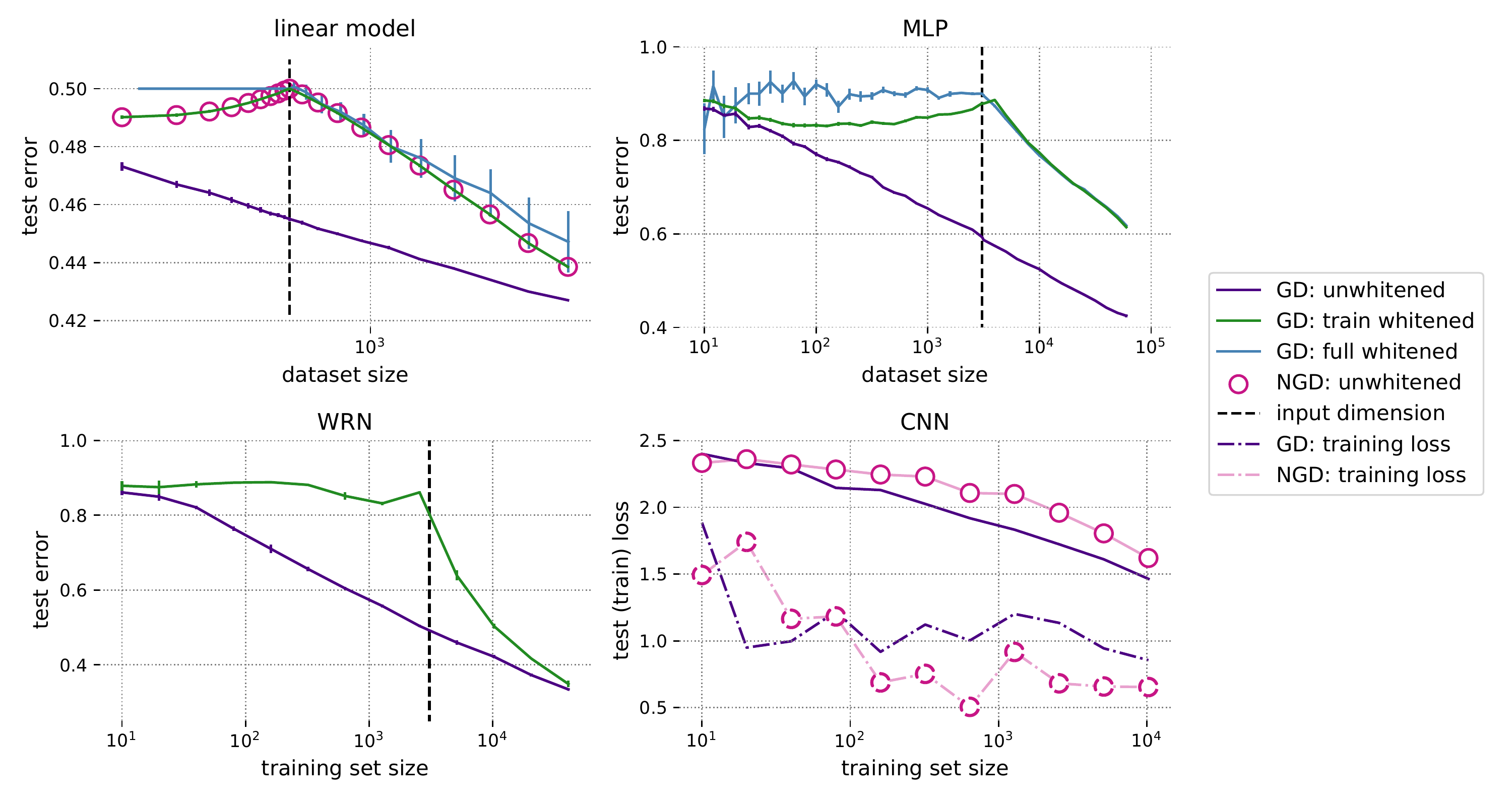}
 \put (3,28) {\textbf{\small(a)}}
 \put (42,28) {\textbf{\small(b)}}
 \put (3,2) {\textbf{\small(c)}}
 \put (42,2) {\textbf{\small(d)}}
\end{overpic}    
    \vspace{-2mm}
\caption{\small
\textbf{Whitening and second order optimization reduce or prevent generalization.}
{\em (a)-(c)}
Models trained on both full-whitened data (blue; panes a,b) and train-whitened data (green; panes a-c) consistently underperform models trained by gradient descent on unwhitened data (purple; all panes).
In (a), Newton's method on unwhitened data (pink circles) behaves identically to gradient descent on whitened data.
{\em (d)} Second order optimization in a convolutional network results in poorer generalization properties than steepest descent. Points plotted correspond to the learning rate and training step with the best validation loss for each method;
data for this experiment was unwhitened.
CIFAR-10 is used for all experiments (see Appendix~\ref{app:mnist_mlp} for experiments on MNIST).
In (c) and (d) we use a cross entropy loss (see Appendix~\ref{sec:methods} for details).
}
\label{fig:whitening_and_2nd_order_harm}
\end{center}
\end{figure*}

\subsection{Newton's Method is Equivalent to Training on Whitened Data for Linear Least Squares Models and for Overparameterized Neural Networks}
\label{section:newton-and-wsgd}
Though in practice unregularized Newton's method is rarely used as an optimization algorithm due to computational complexity, a poorly conditioned Hessian, or poor generalization performance, it serves as the basis of and as a limiting case for most second order methods. Furthermore, in the case of linear least squares models or wide neural networks, it is equivalent to Gauss-Newton descent. In this context, by relating Newton's method to whitening in linear models and wide networks, we are able to give an explanation for why unregularized second order methods have poor generalization performance. We find that our conclusions also hold empirically in a deep CNN (see Figs.~\ref{fig:whitening_and_2nd_order_harm}, \ref{fig:faster_training}).

\subsubsection{Linear least squares}

We compare a pure Newton update step on unwhitened data with a gradient descent update step on whitened data in a linear least squares model. The Newton update step uses the Hessian $H$ of the model as a preconditioner for the gradient:
\es{NewtonUpdate}{
W^{t+1}_{\textrm{Newton}}=W_{\textrm{Newton}}^{t}-\eta H^{-1}\frac{\partial L^t}{\partial W^t}\,.
}
We allow for a general step size $\eta$, with $\eta=1$ giving the canonical Newton update.
When $H$ is rank deficient, we take $H^{-1}$ to be a pseudoinverse.
For a linear model with mean squared error (MSE) loss, the Hessian equals the second moment matrix $\Ftr$, and the model output evolves as
\es{NewtonUpdateFunction}{
f^{t+1}_{\textrm{Newton}}(X)=f^{t}_{\textrm{Newton}}(X)-\eta \frac{\partial L^{t}}{\partial f^{t}_{\textrm{Newton}}}\Xtr^{\top}\Ftr^{-1}X\,.
}

We can compare this with the evolution of a linear model $\hat{f}(X)=\hat{W}MX$ trained via gradient descent on whitened data $\hat{X}=MX$ with a mean squared loss:
\es{whitenedEvolution}{
\hat{f}^{t+1}(X)
=\hat{f}^{t}(X)-\eta\frac{\partial L^{t}}{\partial \hat{f}^{t}}\Xtr^{\top}M^{\top}MX.
}

Noting that $M^{\top}M=\Ftr^{-1}$, Eqs.\,\ref{NewtonUpdateFunction} and \ref{whitenedEvolution} give identical update rules. Thus if 
both functions are initialized to have the same output, 
Newton updates give the same predictions as gradient descent on whitened data. 
While this correspondence is known in the literature, we can now use it to 
say something further, namely that by
applying the argument in Section~\ref{sec:whitening-harms-generalization},
we expect Newton's method to produce linear 
models that generalize poorly. 
This result 
assumes a mean squared loss, but we find experimentally that generalization is also harmed with a cross entropy loss in Fig.\,\ref{fig:whitening_and_2nd_order_harm}(d).

\subsubsection{Overparameterized neural networks}

Many neural network architectures, including fully connected and convolutional architectures, behave as linear models in their parameters throughout training in the large width limit \citep{lee2019wide}. The large width limit occurs when the number of units or channels in intermediate network layers grows towards infinity. 
Because of this, {\em second order optimization harms wide neural networks in the same way it harms linear models}. See Appendix \ref{app:wide_whitening} for details. 

In the wide network limit, $d$ corresponds to the number of features rather than the number of input dimensions, and the number of features is equal to the number of parameters. Second order optimization is therefore predicted to be harmful for much larger dataset sizes when optimizing overparameterized neural networks.

\section{Experiments}\label{sec:experiments}

\begin{figure*}[t]
\begin{center}
\begin{overpic}[width=\linewidth]{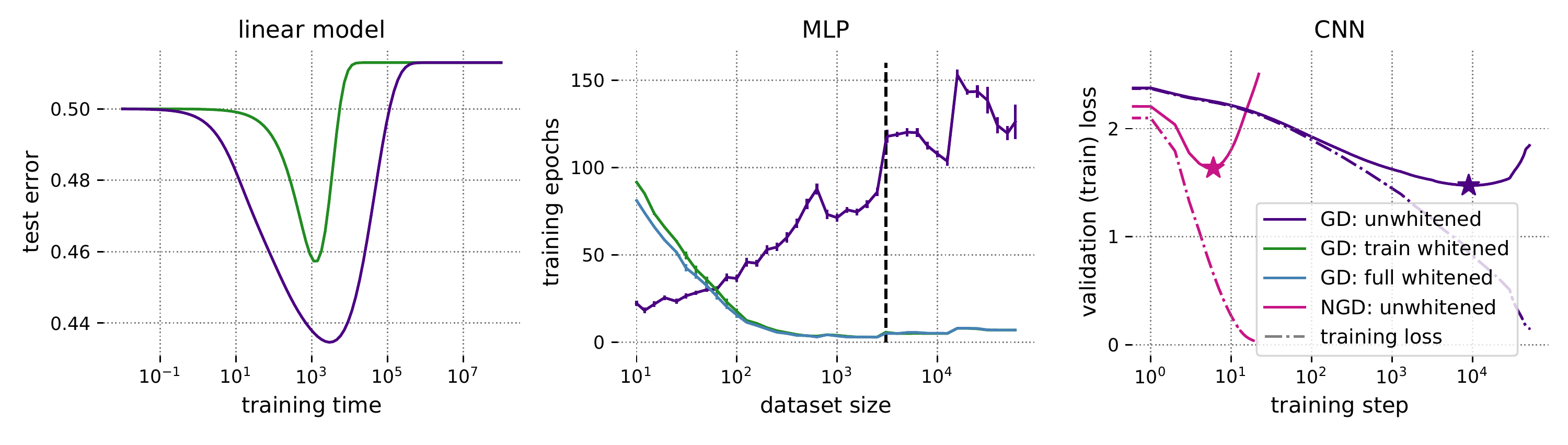}
 \put (3,1.5) {\textbf{\small(a)}}
 \put (36,1.5) {\textbf{\small(b)}}
 \put (69,1.5) {\textbf{\small(c)}}
\end{overpic}    
    \vspace{-5mm}
\caption{\small
\textbf{Models trained on whitened data or with second order optimizers converge faster.}
\emph{(a)} Linear models trained on whitened data optimize faster, but their best test accuracy was always worse. 
Data plotted here is for a training set of size $2560$. Similar results for smaller training set sizes are given in Fig.\,\ref{fig:linear_model_dynamics}.
\emph{(b)} Whitening the data significantly lowers the number of epochs needed to train an MLP to a fixed cutoff in training accuracy, 
when the learning rate and all other training parameters are kept constant.
Discrete jumps in the plot data correspond to points at which
the (constant) learning rate was changed. 
The dashed vertical line indicates the input dimensionality of the data.
See Appendix \,\ref{sec:methods} for details.
\emph{(c)} Second order optimization accelerates training on unwhitened data in a convolutional network, compared to gradient descent. 
Data shown is for a training set of size $10240$.
Stars correspond to values of the validation loss at which test and training losses are plotted in Fig.\,\ref{fig:whitening_and_2nd_order_harm}(d).
}
\label{fig:faster_training}
\end{center}
\end{figure*}

\subsection{Model and Task Descriptions}

Detailed methods are given in Appendix \ref{sec:methods}.

The kernel of all our experiments is as follows: From a dataset, 
we draw a number of subsets, tiling a range of dataset sizes.
Each subset is divided into train, test, and validation examples, and
three copies of the subset are made, two of which are whitened. In one
case the whitening transform is computed using only the training examples (train-whitening), 
and in the other using the training, test, and validation examples (full-whitening).
Note that the test set size must be reduced in order to run experiments on small 
datasets, since the test set is considered part of the dataset for full whitening.
Models are trained from random initialization on each of the three copies of the 
data using the same training algorithm and stopping criterion. 
Test errors and the number of training epochs are recorded.
We emphasize that in any single experiment in which whitening is performed, 
the same whitening transform is always applied to train, test, and validation data. 
Experiments differ in the specific subset of the data (train only or train + test + validation) 
on which the whitening transform is computed.

\paragraph{Linear models and MLPs.} To experimentally demonstrate theoretical results, we study 
CIFAR-10 classification in linear models and CIFAR-10 and MNIST classification
in three-layer, fully connected multilayer perceptrons (MLPs).
Linear models were trained by optimizing mean squared error loss, 
where the model outputs were a linear map between the $512$-dimensional outputs of a four layer convolutional network
at random initialization on CIFAR-10, and their $10$-dimensional one-hot labels. 
This setup is in part motivated by analogy to training the last 
linear readout
layer of a deep neural network.
We solved the gradient flow equation for the time at which the MSE 
on the validation set is lowest, and report the test error at that time. 
The experiment was repeated using
continuous-time Newton's method, consisting of continuous-time gradient descent using an inverse Hessian preconditioner.
MLPs were trained using SGD with constant step size until the training
accuracy reaches
a fixed cutoff threshold, at which point test accuracy was measured.

\paragraph{Convolutional networks.} Since our theoretical results on the effect of whitening apply only to models
with a fully connected and isotropically initialized first layer, we test whether the same qualitative behavior is observed in CNNs trained from a Xavier initialization. 
We chose the popular wide residual (WRN) architecture~\citep{zagoruykoK16-wide-resnet}, trained on CIFAR-10.
Training was performed using full batch gradient descent with a cosine learning rate schedule for a fixed
number of epochs.
Full batch training was used to remove experimental confounds from choosing minibatch sizes at different dataset sizes. A validation set was split from the CIFAR-10 training set.
Test error corresponding to the 
parameter values with the lowest validation error was reported.

We also trained a smaller CNN (a ResNet-50 convolutional block followed by an average pooling layer and a dense linear layer) on unwhitened data with full batch gradient descent 
and with the Gauss-Newton method 
(with and without a scaled identity regularizer)
to compare their respective generalization performances.
A grid search was performed over learning rate, and step sizes were chosen using a backoff line search initialized at that learning rate. 
Test and training losses corresponding to the best achieved validation loss were reported.
Note that this experiment is relatively large scale; because we perform full second order optimization to avoid confounds
due to choosing a quasi-Newton approximation, iterations are cubic in the number of model parameters.


\begin{figure*}[t]
\begin{center}
\begin{overpic}[width=0.7\linewidth]{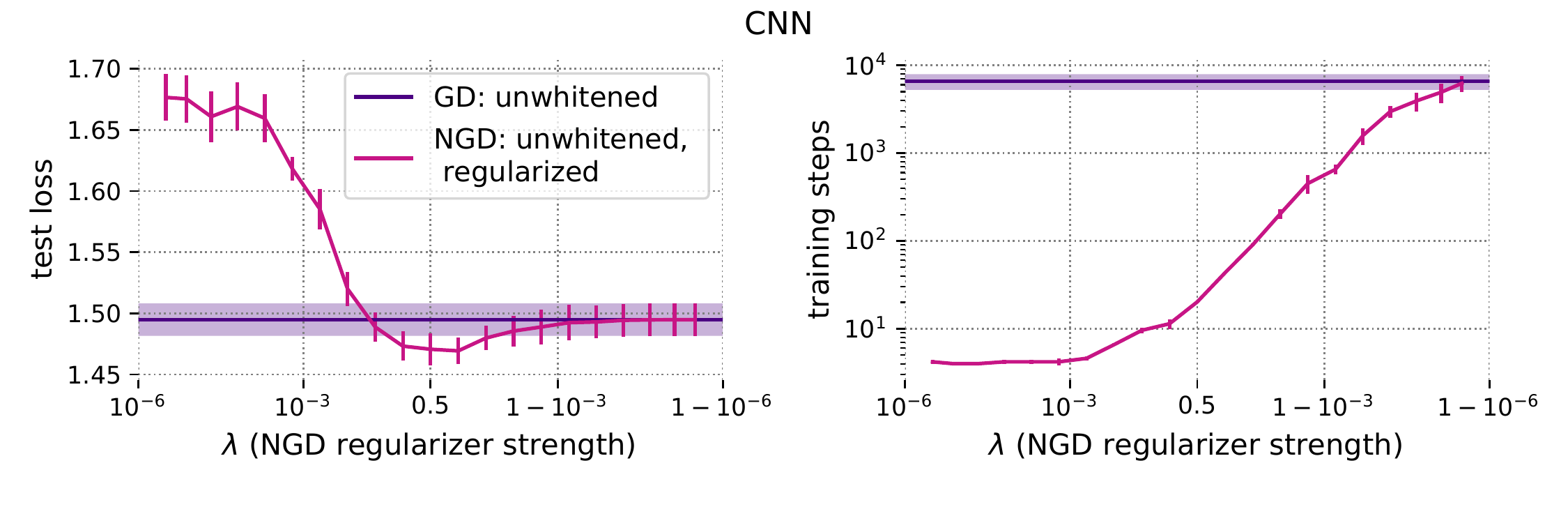}
 \put (7,1.5) {\textbf{\small(a)}}
 \put (55,1.5) {\textbf{\small(b)}}
\end{overpic}
    \vspace{-2mm}
\caption{\small
\textbf{
Regularized second order methods can train faster than gradient descent, with minimal or even positive impact on generalization.
}
Models were trained on a size $10240$ subset of CIFAR-10 by minimizing a cross entropy loss. Error bars indicate twice the standard error in the mean. 
\emph{(a)} Test loss as a function of regularizer strength. At intermediate values of $\lambda$, the second order optimizer produces {\em lower} values of the test loss than gradient descent. Test loss is measured at the training step corresponding to the best validation performance for both algorithms. See text for further discussion.
\emph{(b)} At all values of $\lambda < 1$, the second order optimizer requires fewer training steps to achieve its best validation performance. 
}
\label{fig:regularized-second-order}
\end{center}
\end{figure*}

\subsection{Experimental Results}

\paragraph{Whitening and second order optimization impair generalization.}

In agreement with theory, in Figs.\,\ref{fig:whitening_and_2nd_order_harm}(a) and (b),
linear models and MLPs trained on fully whitened data generalize
at chance levels 
when the size of the dataset is smaller than the dimensionality of the data,
and models trained on train-whitened data perform strictly worse than those
trained on unwhitened data.
Furthermore, the generalization ability of these models recovers only 
\textit{gradually} as the dataset grows.
On CIFAR-10, a $20\%$ gap in performance between MLPs trained on whitened and unwhitened data persists even at the largest dataset size,
suggesting that whitening can remain detrimental even when the number of training examples exceeds the number of features by an order of magnitude.

In Fig.\,\ref{fig:whitening_and_2nd_order_harm}(c) we see 
a generalization gap in the high dimensional regime between WRNs trained
on train-whitened versus unwhitened data, which persists when the size of the dataset grows beyond its dimensionality.
This is despite the fact that the convolutional input layer violates the theoretical requirement of a fully connected first layer, and that we used a Xavier initialization scheme, therefore also violating the 
theoretical requirement for an isotropic first-layer weight initialization.
We note that these results are consistent with the whitening experiments in the original
WRN paper~\citep{zagoruykoK16-wide-resnet}.
Generalization ability begins to recover before the size of the training set reaches its input dimensionality, suggesting that the effect of whitening can 
be countered by engineering knowledge of the data statistics into the model architecture.


In Fig.~\ref{fig:whitening_and_2nd_order_harm}(a), we demonstrate 
experimentally the correspondence we proved in 
Section~\ref{section:newton-and-wsgd}.
In Fig.~\ref{fig:whitening_and_2nd_order_harm}(d), we observe 
that pure second order optimization similarly harms generalization 
even in a convolutional network. Despite training to lower values of 
the training loss, a CNN trained with an unregularized Gauss-Newton method exhibits higher test loss (at the training step with best validation loss) than the same model trained with gradient descent.

\paragraph{Whitening and second order optimization accelerate training.}
In Figs.\,\ref{fig:faster_training}(a) and \ref{fig:linear_model_dynamics}, linear models trained on whitened data or with a second order optimizer converge to their final loss 
faster than models trained on unwhitened data, but their best test performance is always worse.
In Fig.\,\ref{fig:faster_training}(b), MLPs trained on whitened CIFAR-10 data take fewer epochs to reach the same training accuracy cutoff than models trained on unwhitened data, except at very small ($<50$) dataset sizes.
The effect is stark at large dataset sizes, where
the gap in the number of training epochs is two orders of magnitude large.
Second order optimization similarly speeds up training in a convolutional network. In Fig.\,\ref{fig:faster_training}(c), unregularized Gauss-Newton descent 
achieves its best validation loss 
two orders of magnitude faster (as measured in the number of training steps) than gradient descent.

\paragraph{Regularized second order optimization can simultaneously accelerate training and improve generalization.} 
In Fig.\,\ref{fig:regularized-second-order} we perform full batch second order optimization with preconditioner $\left((1-\lambda)B+\lambda I\right)^{-1}$, where $\lambda\in [0,1]$ is a regularization coefficient, and $B^{-1}$ is the unregularized Gauss-Newton preconditioner. $\lambda=0$ corresponds to unregularized Gauss-Newton descent, while $\lambda=1$ corresponds to full batch steepest descent.
At all values of $\lambda$, regularized Gauss-Newton achieves its lowest validation loss in fewer training steps than steepest descent (Fig.\,\ref{fig:regularized-second-order}(b)).
For some values of $\lambda$, the regularized Gauss-Newton method additionally produces lower test loss values than steepest descent (Fig.\,\ref{fig:regularized-second-order}(a)). 

Writing the preconditioner in terms of the eigenvectors, $\hat{e}_{i}$, and eigenvalues, $\mu_{i}$, of $B$,
\es{precond_ebasis}{
\left((1-\lambda)B+\lambda I\right)^{-1} = \sum_{i}\frac{1}{(1-\lambda)\mu_{i}+\lambda}\hat{e}_{i}\hat{e}_{i}^{T}\,,
}
we see that regularized Gauss-Newton optimization acts similarly to unregularized Gauss-Newton in the subspace spanned by eigenvectors with eigenvalues larger than $\lambda/(1-\lambda)$, and similarly to steepest descent in the subspace spanned by eigenvectors with eigenvalues smaller than $\lambda/(1-\lambda)$. 
We therefore suggest that regularized Gauss-Newton should be viewed as discarding information in the large-eigenvector subspace, though our theory does not formally address this case. 
As $\lambda$ increases from zero to one, the ratio $\lambda/(1-\lambda)$ increases from zero to infinity. Regularized Gauss-Newton method therefore has access to information about the relative magnitudes of more and more of the principal components in the data as $\lambda$ grows larger.
We interpret the improved test performance with regularized Gauss-Newton at about $\lambda=0.5$ in Fig.\,\ref{fig:regularized-second-order}(a) as suggesting that this loss of information within the leading subspace is actually beneficial for the model on this dataset, likely due to aspects of the model's inductive bias which are actively harmful on this task. 

\section{Discussion}

\paragraph{Are whitening and second order optimization a good idea?} Our work suggests that whitening and second order optimization come with costs -- a likely reduction in the best achievable generalization. 
However, both can drastically decrease training time -- an effect we also see in our experiments. 
As compute is often a limiting factor on performance \citep{shallue2018measuring}, there are many scenarios where faster training may be worth the reduction in generalization. 
Additionally, the negative effects may be largely resolved if the whitening transform or second order preconditioner are regularized, as is often done in practice~\citep{grosse2016-CNN-KFAC}.
We observe benefits from regularized second order optimization in Fig.\,\ref{fig:regularized-second-order}, and similar results have been observed for whitening~\citep{lee2020infinite}.

\paragraph{Directions for future work.} The practice of whitening has, in the machine learning community, 
largely been replaced by batch normalization, for which it served as inspiration~\citep{ioffe2015-batch-norm}.
Studying connections
between whitening and batch normalization, and especially understanding the degree to which batch normalization destroys information about the data distribution, may be particularly fruitful.
Indeed, some results already exist in this direction~\citep{huang2018}.

Most second order optimization algorithms involve regularization, structured approximations to the Hessian, and often non-stationary online approximations to curvature. Understanding the implications of our theory results for practical second order optimization algorithms should prove to be an extremely fruitful direction for future work. It is our suspicion that more mild loss of information about the training inputs will occur for many of these algorithms.
In addition, it would be interesting to understand how to relax the large width requirement 
in our theoretical analysis.

Recent work analyzes deep neural networks through the lens of information theory \citep{bayesianbounds,tishbydeep,vib,littlebits,blackbox,emergence,brendan,hownot, whereinfo,saxe, schwartz2019information}, often computing measures of mutual information similar to those we discuss. Our result that the only usable information in a dataset is contained in its sample-sample second moment matrix $K$ may inform or constrain this type of analysis.

\section*{Acknowledgements}

We thank Jeffrey Pennington for help formulating the project, and Justin Gilmer, Roger Grosse,  Nicolas Le Roux, and Jesse Livezey for feedback on the manuscript.
This work was done while NSW was an intern at Google Brain.

\bibliography{whitening_harms_generalization}
\bibliographystyle{icml2021}

\newpage
\onecolumn

\appendix

\setcounter{figure}{0}
\setcounter{table}{0}
\renewcommand{\thefigure}{App.\arabic{figure}}
\renewcommand{\thetable}{App.\arabic{table}}


\section{Isotropy of Weight Initialization Implies Conditional Independence}
\label{sec:isotropy-explanation}
In this section we show that for isotropic initial weight distributions,
\es{initial_isotropy}{
P(W^{0}R)=P(W^{0})\ \forall\, R\in O(d)\,,
}
the training activations $\Ztr$ depend on the training data $\Xtr$ only through the second moment matrix $\Ktr$. This is summarized in Eq.\,\ref{eq:MIInitIso} repeated here for convenience:
\es{eq:app_MIInitIso}{
\mathcal I(\Ztr^0; \Xtr\mid \Ktr) = 0.\nonumber
}
The argument is as follows, the isotropy of the initial weight distribution implies that the distribution of first layer activations conditioned on the training data is invariant under orthogonal transformations.
\es{eq:actrotinv}{
P(\Ztr^{0}|R\Xtr)=P(\Ztr^{0}|\Xtr) \ \forall R\in O(d)\,.
}
To derive this we can write the distribution over $\Ztr^{0}$ in terms of the distribution over initial weights, $P(\Ztr^{0}|\Xtr)=\int DW^{0} P(W^{0})\delta(\Ztr^{0}-W^{0}\Xtr)$. Here $DW^{0}$ is the uniform measure over the components of $W^{0}$, $DW^{0}$. We then have
\es{Zinv}{
P(\Ztr^{0}|R\Xtr)&=\int DW^{0} P(W^{0})\delta(\Ztr^{0}-W^{0}R\Xtr)\\
&=\int D\tilde{W}^{0} P(\tilde{W}^{0}R^{T})\delta(\Ztr^{0}-\tilde{W}^{0}\Xtr)\\
&=\int D\tilde{W}^{0} P(\tilde{W}^{0})\delta(\Ztr^{0}-\tilde{W}^{0}\Xtr)\,=\,P(\Ztr^{0}|\Xtr)\,.
} Here, $\delta$ denotes the Dirac delta function. To arrive at the second line we defined $\tilde{W}^{0}:=W^{0}R$ and used the invariance of the measure $DW^{0}$. The third line follows from the $O(d)$ invariance of the initial weight distribution. Now that we have established the rotational invariance of the distribution over first layer activations we can derive Eq.\,\ref{eq:MIInitIso}.

By the first fundamental theorem of invariant theory \citep{Kraft1996ClassicalIT}, the only $O(d)$ invariant functions of $n$ vectors in $d$ dimensions are the $n^{2}$ inner products $\Ktr=\Xtr^{\top}\Xtr$. Thus $P(\Ztr^{0}|\Xtr) = h(\Ktr)$ for some function $h$, and $P(\Ztr^{0}|\Xtr,\Ktr)=P(\Ztr^{0}|\Ktr)$. Eq.\,\ref{eq:MIInitIso} then follows from the definition of conditional mutual information.
\es{condMI}{
I(\Ztr^0; \Xtr\mid \Ktr):&=\mathbb{E}_{\Ktr}\left[D_{\textrm{KL}}(P(\Ztr^{0},\Xtr|\Ktr)||P(\Ztr^{0}|\Ktr)P(\Xtr|\Ktr))\right]\\
&=\mathbb{E}_{\Ktr}\left[P(\Xtr|\Ktr)D_{\textrm{KL}}(P(\Ztr^{0}|\Xtr,\Ktr)||P(\Ztr^{0}|\Ktr))\right]\\
&=0\,.
}

\section{Alternative Proof of Theorem \ref{thm:mi}}\label{app mut inf proof}

The proof statement in Eq.\,\ref{eq:mi_train} can alternatively be directly established by identifying
the sources of randomness and mutual information in the update rules 
for $\Ztr^{t}$ and $\theta^t$, and making an inductive argument. 

The base case, $t=0$, is already discussed in the paper body.
Assume Eq.\,\ref{eq:mi_train} holds for $t=i$.
By the chain rule,
\begin{align}\label{eq:mi-intermediate}
    \mathcal{I}(\Ztr^{i+1},\theta^{i+1};X_{\textrm{train}}\mid \Ktr ,\Ytr )
    = \mathcal{I}(\Ztr^{i+1};X_{\textrm{train}}\mid \Ktr,\Ytr )
    + \mathcal{I}(\theta^{i+1};X_{\textrm{train}}\mid \Ktr,\Ytr, \Ztr^{i+1} ).
\end{align}
The update rules for $\Ztr$ and $\theta$ are given in Eqs.\,\ref{eq:update_rule_Z} 
and \ref{eq:update_rule_weights}, respectively, where
$L^{i}=L(g_{\theta^{i}}(\Ztr^{i});\Ytr)$. When $\Ktr$ and $\Ytr$ are held fixed, 
the only sources of randomness in $\Ztr^{i+1}$ and $\theta^{i+1}$ are $\Ztr^{i}$ and $\theta^{i}$.
By the initial assumption, these are conditionally independent of $\Xtr$ given $\Ktr$ and $\Ytr$,
and therefore both terms on the right-hand side of Eq.\,\ref{eq:mi-intermediate} evaluate to zero.

\section{Lower Dimensional Whitening}\label{sec:reduced-dimensionality_proof}
Here we prove Theorem~\ref{thm:reduced-dimensionality} describing the loss of information when whitening $d < n$.
\begin{proof}
By Theorem \ref{thm:test-prediction-K-dependence} and the causal diagram in Fig.\,\ref{fig:causal_diagrams}(c), we know that the distribution over test set predictions depends on model inputs only through $K$. Since $K$ is positive
semidefinite, it is fully specified by $\nicefrac{\pp{n^2 + n}}{2}$ entries. For d < n these entries are not independent. $K$ encodes the inner products between $n$ vectors in $d$ dimensions, These are specified by $n$ magnitudes and $n (d - 1) - (d (d - 1))/2$ independent angles.  

Thus, for a model trained on unwhitened data,
\begin{align}
    c &= \min\pp{ n d-\frac{d^{2}-d}{2}, \frac{n^2 + n}{2} }.
\end{align}

Next we consider the case that full data whitening has been performed, such that $\hat{F} = I$. 
Recall the following identity, where $^+$ indicates the pseudoinverse:
\begin{align}
    \hat{X}^\top &= \hat{X}^+ \hat{X} \hat{X}^\top.
\end{align}
Using this, we can rewrite $\hat{K}$ as
\begin{align}
    \hat{K} &= \hat{X}^\top \hat{X} = \hat{X}^+ \hat{X} \hat{X}^\top \hat{X} = \hat{X}^+ \hat{F} \hat{X} 
    \nonumber\\
    &= \hat{X}^+ \hat{X}.
\end{align}
Next consider a modified dataset
\begin{align}
    \widetilde{X} &= Q \hat{X} = \left[ I \cdots \right],
\end{align}
where the matrix $Q\in\mathbb{R}^{d\times d}$ has been chosen such that the first $d$ columns of $\widetilde{X}$ correspond to the identity matrix (ie $Q$ is the inverse of the submatrix formed by the first $d$ columns of $\hat{X}$). Because its first $d$ columns are deterministic, $\widetilde{X}$ can be stored using $\pp{n-d} d$ real values. Despite being represented by $d^2$ fewer values, this compressed dataset can still be used to reconstruct $\hat{K}$,
\begin{align}
    \hat{K} &= \hat{X}^+ \hat{X} = \hat{X}^+ Q^{-1} Q \hat{X} = \pp{Q \hat{X}}^+ Q \hat{X} 
    \nonumber\\
    &= \widetilde{X}^+ \widetilde{X}.
\end{align}
We further observe that when $n > d$, $\pp{n-d} d < \nicefrac{\pp{n^2 + n}}{2}$. This is enough to establish
\begin{align}
\hat{c} &= \pp{n-d} d < c.
\end{align}
\end{proof}

\section{Whitening in Linear Models}
\label{sec:whitening_linear_models}

Linear models are widely used for regression and prediction tasks and provide an instructive laboratory to understand the effects of data whitening. 
Furthermore, linear models provide additional intuition for why whitening is harmful -- whitening puts signal and noise directions in the data second moment matrix, $F$, on equal footing (see Fig.\,\ref{fig:whitening_explanation}). For data with a good signal to noise ratio, unwhitened models learn high signal directions early during training and only overfit to noise at late times. For models trained on whitened data, the signal and noise directions are fit simultaneously and thus the models overfit immediately.

Consider a linear model with mean squared error loss,
\es{linear model}{
f(X) = WX\,, \ \ \ L = \frac{1}{2}||f(X)-Y||^{2}\,.
}
This loss function is convex. Here we focus on the low dimensional case, $d<n$, where the loss has a unique global optimum $W^{\star}=\Ftr^{-1}\Xtr\Ytr$. The model predictions at this global optimum, $f_{\star}(X)=W^{\star}X$, are invariant under any whitening transform (\ref{def:whitening}). As a result, any quality metric (loss, accuracy, etc...) for this global minimum is the same for whitened and unwhitened data.

The story is more interesting, however, during training.
Consider a model trained via gradient flow (similar statements can be made for gradient descent or stochastic gradient descent). The dynamics of the weights are given by
\es{gradflow}{
\frac{dW}{dt}=-\frac{\partial L}{\partial W}\,, \ \ \ W(t) = e^{-t \Ftr} W(0) + (1-e^{-t \Ftr}) W^{*}\,.
}
The evolution in Eq.\,\ref{gradflow} implies that the information contained in the trained weights $W(t)$ about the training data $X$ is entirely determined by $F$ and $W^{\star}$. In terms of mutual information, we have
\es{low-d info}{
\mathcal{I}(W(t); X | \Ftr, W^{\star}) = 0\,.
}
As whitening sets $\Ftrhat=I$, a linear model trained on whitened data does not benefit from the information in $\Ftr$.

At a more microscopic level, we can decompose Eq.\,\ref{gradflow} in terms of the eigenvectors, $v_{i}$, of $F$:
\es{eigenevolution}{
W(t)=\sum_{i=1}^{d}v_{i}w_{i}(t), \
w_{i}(t)=e^{-t \lambda_{i}}w_{i}(0)+(1-e^{-\lambda_{i}t})w^{\star}_{i}\,.
}
We see that for unwhitened data the eigen-modes with larger eigenvalue converge more quickly towards the global optimum, while the small eigen-directions converge slowly. For centered $X$, $F$ is the feature covariance and these eigen-directions are exactly the principle components of the data. As a result, training on unwhitened data is biased towards learning the top principal directions at early times. This bias is often beneficial for generalization. Similar simplicity biases have been found empirically in
deep linear networks \citep{saxe2013} and in deep networks trained via SGD \citep{rahaman2018spectral, NIPS2019_8723} where networks learn low frequency modes before high. 
In contrast, for whitened data, $\Ftrhat=I$ and the evolution of the weights takes the form
\es{whitenedevol}{
\hat{w}_{i}(t)&=e^{-t}\hat{w}_{i}(0)+(1-e^{-t})\hat{w}^{\star}_{i}\,.
}
All hierarchy between the principle directions has been removed, thus training fits all directions at a similar rate. For this reason linear models trained on unwhitened data can generalize significantly better at finite times than the same models trained on whitened data. Empirical support for this in a linear image classification task with random features is shown in Fig.\,\ref{fig:whitening_and_2nd_order_harm}(a).

\subsection{The Global Optimum}
At the global optimum, $W^{\star}=\Ftr^{-1}\Xtr\Ytr$, the network predictions on test points can be written in a few equivalent ways,
\es{optimumfunc}{
f_{\star}(\Xtest)=\Ytr^{T}\Xtr^{T}\Ftr^{-1}\Xtest = \Ytr^{T}K_{\textrm{train}}^{+}K_{\textrm{train}\times\textrm{test}} = \Ytr^{T}\Kmixhat\,.
}
Here, the $+$ superscript is the pseudoinverse, and $\Kmixhat$ is the whitened train-test data-data second moment matrix. These expressions make manifest that the test predictions at the global optimum only depend on the training data through $\Ktr$ and $\Kmix$.

Let us briefly consider the case where $n\leq d$ with full data whitening.
The global optimum $W^{\star}$ is still unique up to a pseudoinverse: 
$W^{\star}=\Ftr^{+}\Xtr\Ytr$. When full data whitening is performed, we have 
$\hat{K}=I$ from Eq.\,\ref{whitening}, and so the
mixed second moment matrix $\hat{\Kmix}$, which is an off-diagonal block of $\hat{K}$, 
is a zero matrix. Therefore $f_{\star}(\Xtest)=\Ytr^{T}\Kmixhat=0$ for all the test points, 
and it is particularly simple to see how whitening can destroy generalization.

\subsection{High Dimensional Linear Models}
The discussion is very similar in the high dimensional case, $d>n$. In this case, there is no longer a unique solution to the optimization problem, but there is a unique optimum within the span of the data.
\es{highdop}{
W^{\star}_{\parallel}=\left(\Ftr^{\parallel}\right)^{-1}\Xtr^{\parallel}\Ytr\,, \ \ \ W^{\star}_{\perp} = W_{\perp}(0)\,.
}
Here, we have introduced the notation $\parallel$ for directions in the span of the training data and $\perp$ for orthogonal directions. Explicitly, if we denote by $V^{\parallel} = \{v_{1},v_{2},\ldots,v_{n}\}\in\mathbb{R}^{n\times d}$ the non-null eigenvectors of $\Ftr$ and $V^{\perp}=\{v_{n+1},v_{n+2},\ldots,v_{d}\}\in\mathbb{R}^{(d-n)\times d}$ the null eigenvectors, then $\Xtr^{\parallel}:=V^{\parallel}\Xtr$, $W_{\parallel}:=WV^{\parallel}$, $W_{\perp}:=WV^{\perp}$, and $\Ftr^{\parallel}:=V^{\parallel}\Ftr(V^{\parallel})^{T}$.

The model predictions at this optimum can be written as
\es{highdopf}{
f_{\star}(\Xtest)=f^{0}(\Xtest)-\left(f^{0}(\Xtr)-\Ytr\right)^{T}\Ktr^{-1}\Kmix\,.
}
This is the solution found by GD, SGD, and projected Newton's method.

The evolution approaching this optimum can be written (again assuming gradient flow for simplicity) as
\es{highdmatrixevol}{
W_{\parallel}(t) = e^{-t \Ftr^{\parallel}} W_{\parallel}(0) + (1-e^{-t \Ftr^{\parallel}}) W^{*}_{\parallel}\,, \ \ \ W_{\perp}(t)=W_{\perp}(0).
}
In terms of the individual components, $[W_{\parallel}(t)]_{i} = e^{-t\lambda_{i}} [W_{\parallel}(0)]_{i} + (1-e^{-t\lambda_{i}}) [W^{*}_{\parallel}]_{i}$.

As above, the hierarchy in the spectrum allows for the possibility of beneficial early stopping, while whitening the data results in immediate overfitting. 

\subsection{Supplementary Experiments with Linear Least Squares}

In Fig.\,\ref{fig:linear_model_dynamics} we present the same experiment as in Fig.\,\ref{fig:faster_training}(a) at three additional dataset sizes. In all cases, the best test performance achievable by early stopping on whitened data was worse than on unwhitened data.

In Fig.\,\ref{fig:distribution_whitening_linear_model}, we study the effect on generalization of using the entire dataset of $60000$ CIFAR-10 images to compute the whitening transform regardless of training set size. We call this type of whitening `distribution whitening' to indicate that we are interested in what happens when the whitening matrix is approaches its ensemble limit.

In Fig.\,\ref{fig:large_variance_linear_model}, we compare generalization performance of
models trained on whitened versus unwhitened data from two different parameter
initializations. Initial distributions with larger variance produce models that 
generalize worse, but for a fixed initial distribution, models trained on whitened
data generally underperform models trained on unwhitened data.

\begin{figure}[ht]
\begin{center}
\centerline{\includegraphics[width=0.9\linewidth]{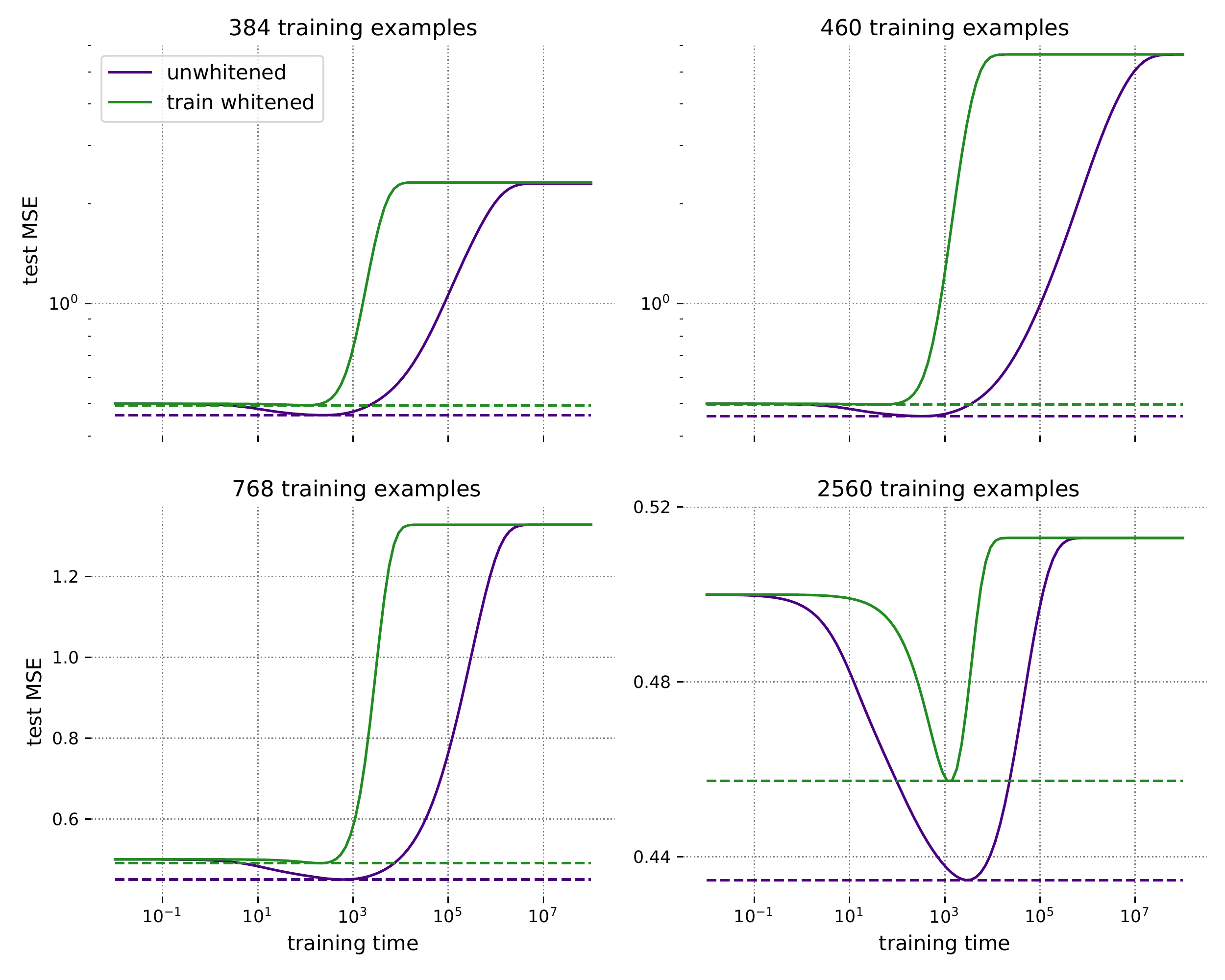}}
\caption{\small 
\textbf{Whitening data speeds up training but reduces generalization in linear models.} Here we show representative examples of the evolution of 
test error with training time in a linear least-squares model where
the training set consists of $384, 460, 768, 2560$ examples, as labeled. In all cases, while models trained on train-whitened
data (in green) reach their optimal mean squared errors in a smaller
number of epochs, they do no better than models trained on unwhitened
data (in purple). In the large time limit of training, the two kinds of models are
indistinguishable as measured by test error. The $y$-axis in the top row of plots is in log scale for clarity.
In all cases, the input dimensionality of the data was $512$.
}
\label{fig:linear_model_dynamics}
\end{center}
\end{figure}

\begin{figure}[ht]
\begin{center}
\centerline{\includegraphics[width=0.8\linewidth]{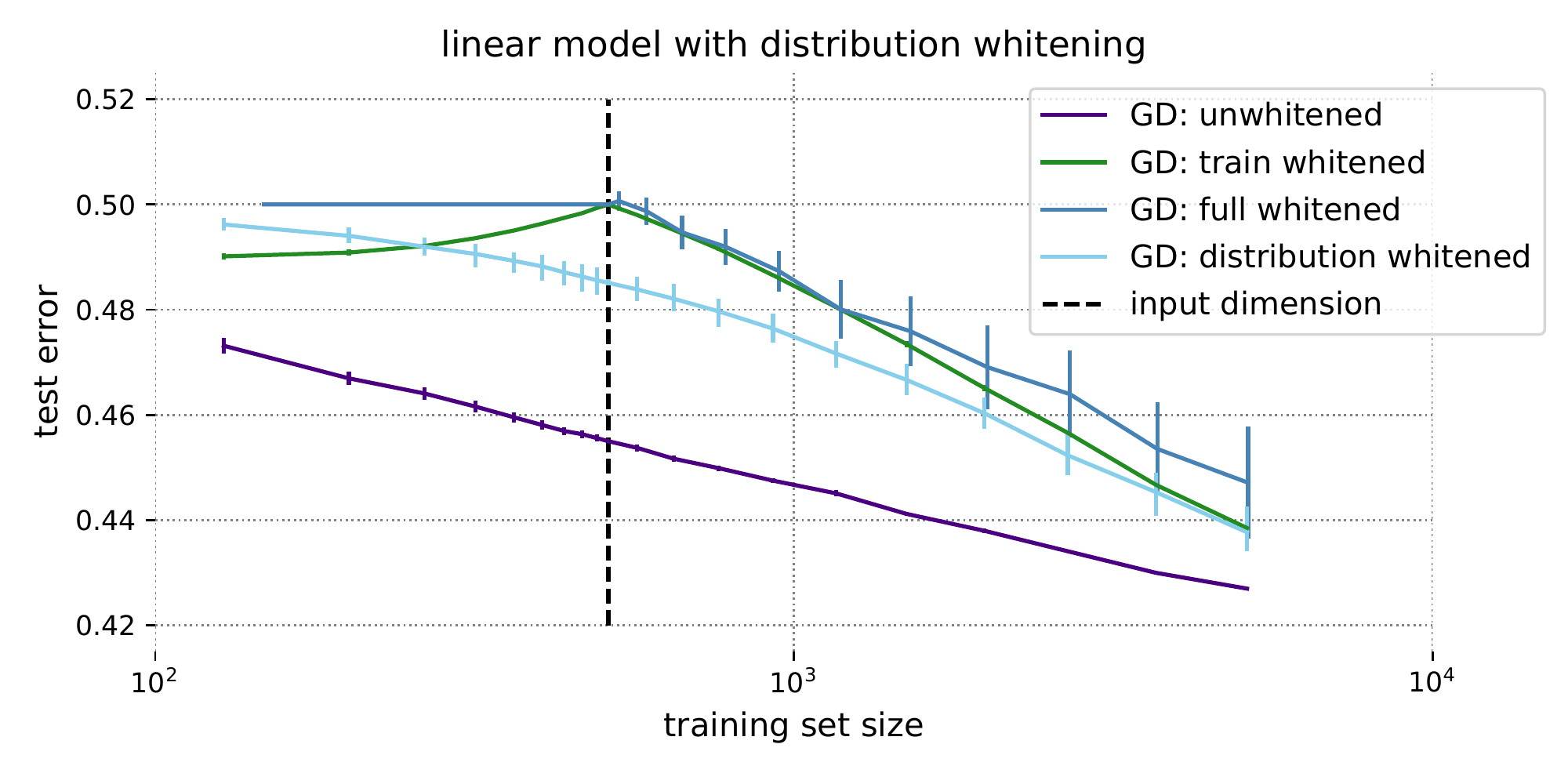}}
\caption{\small 
\textbf{Whitening using the entire dataset behaves similarly to conventional whitening, with only a slight improvement in performance.}
Whitening using a whitening transform computed on the entire CIFAR-10 dataset of $50000$ training and $10000$ test images (distribution whitening) improves performance over train- and full whitening, but does not close the performance gap with unwhitened data.
With the exception of the `distribution whitened' line, gradient descent data in this plot is identical to Fig. \ref{fig:whitening_and_2nd_order_harm}(a).
}
\label{fig:distribution_whitening_linear_model}
\end{center}
\end{figure}

\begin{figure}[ht]
\begin{center}
\centerline{\includegraphics[width=0.8\linewidth]{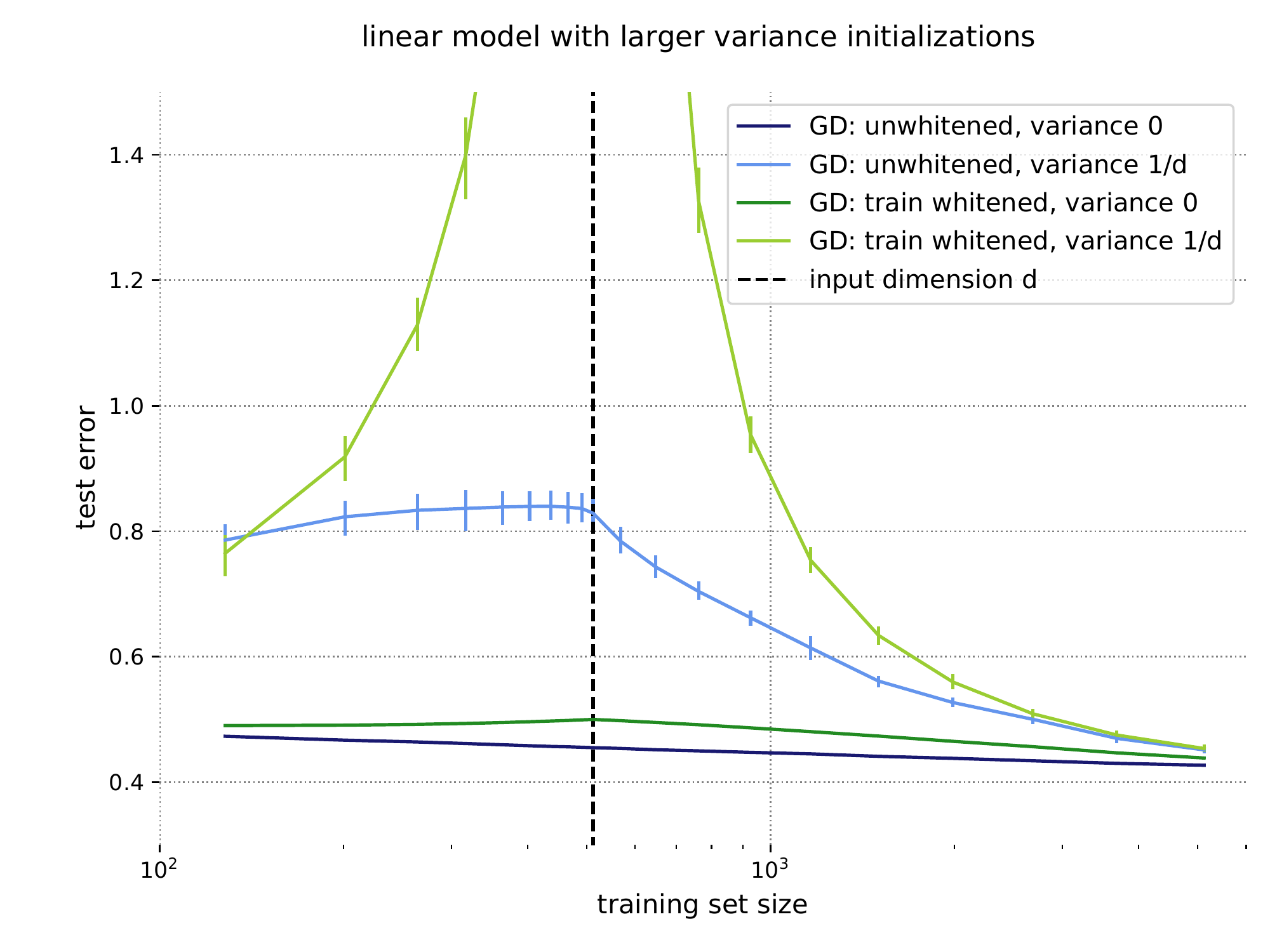}}
\caption{\small 
\textbf{The effect of whitening on linear models with non-zero parameter initialization.} Linear models are initialized with parameter variances of 0 or $1/d$. In all cases the test loss is reported for the time during gradient flow training when the model achieves the lowest validation loss. Unwhitened data was scaled to have the same norm accumulated over all samples in the training set as whitened data, for each training set size, to avoid artifacts due to overall input scale. A model output of zero corresponds to a test loss of 0.5. All configurations with loss greater than 0.5 are doing \emph{worse} than an uninformative prediction of 0. At both initialization scales, the model trained on whitened data performs worse than the model trained on unwhitened data for almost all dataset sizes, while for one dataset size they perform similarly.
Data for the variance $0$ initialization is identical to Fig. \ref{fig:whitening_and_2nd_order_harm}(a).
}
\label{fig:large_variance_linear_model}
\end{center}
\end{figure}

\section{Second Order Optimization of Wide Neural Networks}
\label{app:wide_whitening}

Here we consider second order optimization for wide neural networks. In recent years much progress has been made in understanding the dynamics of wide neural networks \citep{ntk}, in particular it has been realized that wide networks trained via GD, SGD or gradient flow evolve as a linear model with static, nonlinear features given by the derivative of the network map at initialization \citep{lee2019wide}.

In this section we extend the connection between linear models and wide networks to second order methods. In particular we argue that \emph{wide networks trained via a regularized Newton's method evolve as linear models trained with the same second order optimizer.}

We consider a regularized Newton update step,
\es{regnewton}{
\theta^{t+1}=\theta^{t}-\eta\left(\epsilon\mathbf{1}+H\right)^{-1}\frac{\partial L^{t}}{\partial\theta}\,.
}
This diagonal regularization is a common generalization of Newton's method. One motivation for such an update rule in the case of very wide neural networks is that the Hessian is necessarily rank deficient, and so some form of regularization is needed. 

For a linear model, $f_{\textrm{linear}}(x)=\theta^{\top}\cdot g(x)$, with fixed non-linear features, $g(x)$, the regularized newton update rule in weight space leads to the function space update.
\es{functionspacenewton_linear}{
f^{t+1}_{\textrm{linear}}(x)=f^{t}_{\textrm{linear}}(x)-\eta\sum_{x_{a},x_{b}\in \Xtr}\Theta_{\textrm{linear}}(x,x_{a})\left(\epsilon\mathbf{1}+\Theta_\textrm{linear}\right)^{-1}_{ab}\frac{\partial L_{b}}{\partial f_{\textrm{linear}}}\,.
}
Here, $\Theta_{\textrm{linear}}$, is a constant kernel, $\Theta_{\textrm{linear}}(x,x')=\frac{\partial f}{\partial\theta}^{\top}\cdot\frac{\partial f}{\partial\theta}=g^{\top}(x)\cdot g(x')$.

For a deep neural network, the function space update takes the form
\es{functionspacenewton}{
f^{t+1}(x)=&f^{t}(x)-\eta\sum_{x_{a},x_{b}\in \Xtr}\Theta(x,x_{a})\left(\epsilon\mathbf{1}+\Theta\right)^{-1}_{ab}\frac{\partial L_{b}}{\partial f}\\ &\quad+\frac{\eta^{2}}{2}\sum_{\mu,\nu=1}^{P}\frac{\partial^{2}f}{\partial\theta_{\mu}\partial\theta_{\nu}}\Delta\theta_{\mu}^{t}\Delta\theta_{\nu}^{t}+\cdots.
}
Here we have indexed the model weights by $\mu=1\ldots P$, denoted the change in weights by $\Delta\theta^{t}$ and introduced the neural tangent kernel (NTK), $\Theta(x,x')=\frac{\partial f^{\top}}{\partial\theta}\cdot\frac{\partial f}{\partial\theta}$.

In general Eqs.\,\ref{functionspacenewton_linear} and \ref{functionspacenewton} lead to different network evolution due to the non-constancy of the NTK and the higher order terms in the learning rate. For wide neural networks, however, it was realized that the NTK is constant \citep{ntk} and the higher order terms in $\eta$ appearing on the second line in vanish at large width \citep{Dyer2020AsymptoticsOW,Huang2019DynamicsOD,Littwin2020OnTO,Andreassen2020,Aitken2020}.\footnote{These simplifications were originally derived for gradient flow, gradient descent and stochastic gradient descent, but hold equally well for the regularized Newton updates considered here. This can be seen, for example, by applying Theorem 1 of \citep{Dyer2020AsymptoticsOW}.}

With these simplifications, the large width limit of Eq.\,\ref{functionspacenewton} describes the same evolution as a linear model trained with fixed features $g(x)=\frac{\partial f(x)}{\partial\theta}|_{\theta=\theta_{0}}$ trained via a regularized Newton update.

\section{MLP on MNIST}
\label{app:mnist_mlp}

Here we present the equivalent of Fig.\,\ref{fig:whitening_and_2nd_order_harm}(b) for an MLP trained on MNIST. 
Experimental details are given in Appendix\,\ref{sec:methods}. 
Similar to the results in Fig.\,\ref{fig:whitening_and_2nd_order_harm}(b) on CIFAR-10, in Fig.\,\ref{fig:mlp_mnist}, we find that
models trained on fully whitened data generalize at chance levels (indicated by a test error of $0.9$) in the high dimensional regime. Because MNIST is highly rank deficient, this result holds until the size of the dataset exceeds its input rank. 
Models trained on train-whitened data also exhibit reduced generalization when compared with models trained on unwhitened data, which exhibit steady improvement in generalization with increasing dataset size.

\begin{figure}[h]
\begin{center}
\centerline{\includegraphics[width=0.6\linewidth]{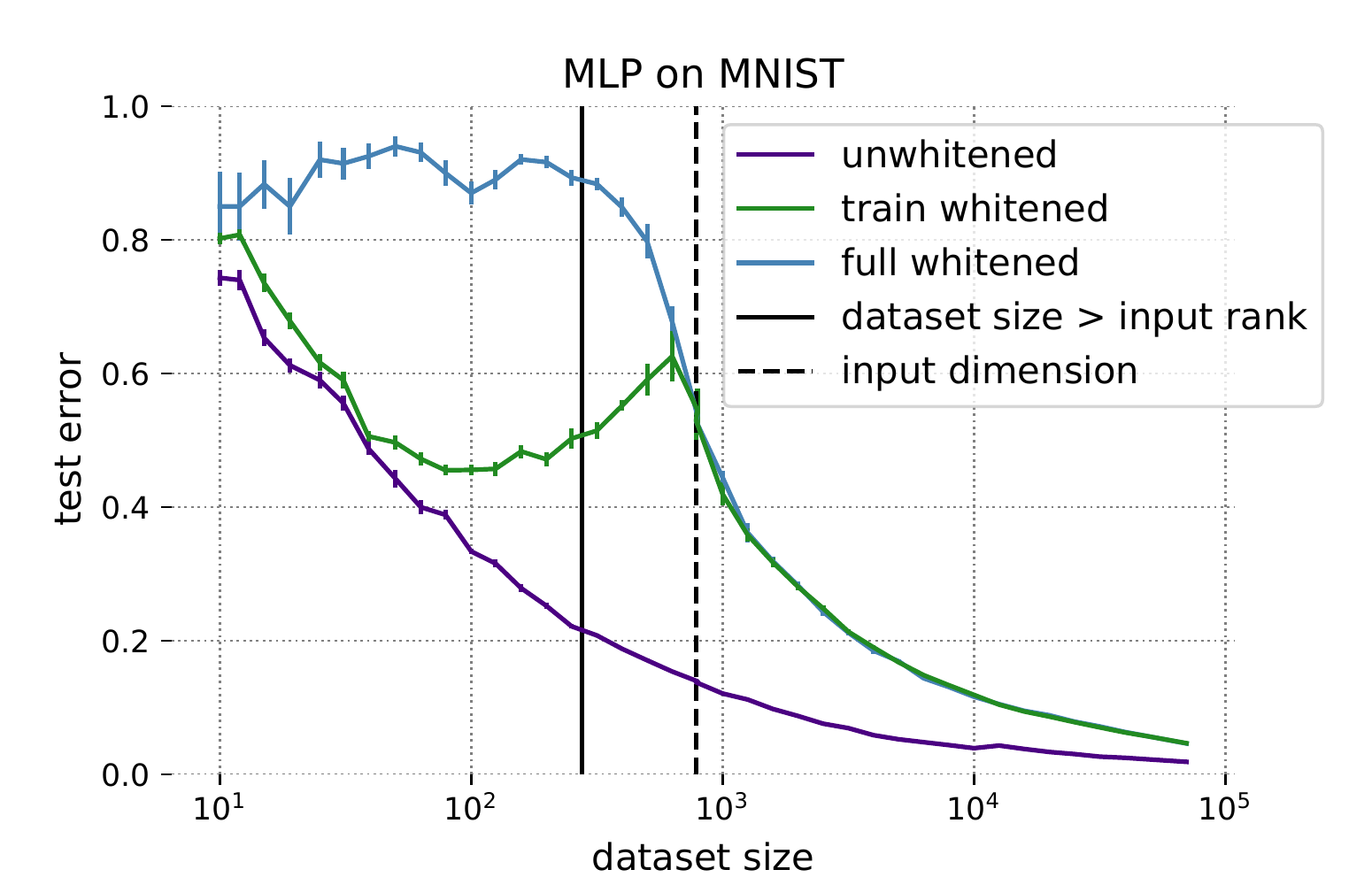}}
\caption{\small \textbf{Whitening MNIST before training negatively impacts generalization in MLPs.} Models trained on fully whitened data (in blue) are unable to generalize until the size of the dataset exceeds its maximal input rank of $276$, indicated by the solid black vertical line. Regardless of how the whitening transform is computed, models trained on whitened data (blue and green) consistently underperform those trained on unwhitened data (in purple).
}
\label{fig:mlp_mnist}
\end{center}
\end{figure}

\section{Methods}
\label{sec:methods}

\subsection{Whitening Methods}

\subsubsection{PCA Whitening}
Consider a dataset $X\in\mathbb{R}^{d\times n}$. 
PCA whitening can be viewed as a two-step operation involving
rotation of $X$ into the PCA basis, followed by the normalization of all
PCA components to unity. We implement this transformation as follows.
First, we compute the
the singular value decomposition of the unnormalized feature-feature 
second moment matrix $XX^{\top}$:
\begin{equation}
    XX^{\top} = U\Sigma V^{\top},
\end{equation}
where the rank of $\Sigma$ is $\min(n, d)$.
The PCA whitening transform is then computed as $M=\Sigma^{-1/2}\cdot V^{\top}$, 
where the dot signifies element-wise multiplication between the whitening
coefficients, $\Sigma^{-1/2}$, and their corresponding singular vectors.
When $\Sigma$ is rank deficient ($n<d$), we use one of two methods to 
stabilize the computation of the inverse square root: the addition of 
noise, or manual rank control. In the former, a small jitter is added to
the diagonal elements of $\Sigma$ before inverting it. This was 
implemented in the experiments in linear models. In the latter, 
the last $d-n$ diagonal elements of $\Sigma^{-1/2}$ are explicitly
set to unity. This method was implemented in the MLP
experiments.

\subsubsection{ZCA Whitening}
ZCA (short for zero-phase components analysis) \citep{bell1997} can be thought of as 
PCA whitening followed by a rotation back into the original basis.
The ZCA whitening transform is $M=U\Sigma^{-1/2}\cdot V^{\top}$. ZCA whitening
produces images that look like real images, preserving local structure.
For this reason, it is used in the CNN experiments.

\subsection{Linear Model}

{\bf Dataset composition.} The dataset for this experiment was a modified version of 
CIFAR-10, where
the images were first processed by putting them through an off-the-shelf
(untrained) four layer convolutional network with $\tanh$ nonlinearities
and collecting the outputs of the 
penultimate layer. This resulted in a dataset of $512$-dimensional
images and their associated labels. Both the CIFAR-10 training and test 
sets were processed in this way. The linear estimator was trained 
to predict one-hot (ten dimensional) labels.

Training set sizes ranged from $128$ to $5120$ examples,
randomly sampled from the preprocessed CIFAR-10 data. 
For experiments on unwhitened and train-whitened data, a validation set of $10000$ images
was split from the CIFAR-10 training set, and test error was measured on the CIFAR-10 test set.
For experiments on fully whitened data, validation and test sets of $10$ images each 
were split from the CIFAR-10 training and test sets, respectively.

{\bf Whitening.} At each training set size, four copies of the data were made, and three were whitened using the PCA whitening method. 
For train-whitened data, the whitening transform was computed using only the training examples. For fully whitened data, the twenty validation and test images were included in the computation of the whitening transform.
For distribution whitened data (Fig.\,\ref{fig:distribution_whitening_linear_model}), the entire CIFAR-10 dataset of $60000$ images (train as well as test) was used to compute the whitening transform.

{\bf Training and Measurements.} We used a mean squared error loss function. 
Weights were initialized to all-zeros, except for the data in Fig.\,\ref{fig:large_variance_linear_model}, for which initial weights were drawn 
from a Gaussian with variance $1/d$.
At each training set size, fifty models (initialized with 
different random
seeds) were trained with full-batch gradient descent, with the optimization
path defined by the gradient flow equation. Writing the model parameters
as $\phi$, this equation is
$$\phi(t) = \phi^{*} + e^{-t C B}(\phi^{*} - \phi^{(0)})$$ for preconditioner $B$, feature-feature correlation matrix $C$, infinite-time solution $\theta^{*}$, and initial iterate $\theta^{(0)}$. In the case of gradient descent,
the preconditioner $B$ is simply the identity matrix.

In order to generate the plot data for Fig.~\ref{fig:whitening_and_2nd_order_harm}(a),
we solved the gradient flow equation for the parameters $\phi$ that achieved
the lowest validation error, and calculated the test error achieved by those
parameters. Mean test errors and their inner $80$th percentiles across the twenty
different initializations and across whitening states were computed and plotted. To make the plots in Fig.\,\ref{fig:faster_training}(a) and \ref{fig:linear_model_dynamics}, we tracked test performance over the course of training on unwhitened and train-whitened data. 

On train-whitened datasets, we also implemented Newton's Method. 
This was done by putting the preconditioner $B$ in the gradient flow equation equal to the inverse Hessian, i.e. $\left(XX^{\top}\right)^{-1}$.
The preconditioner was computed once using the whole training set, and 
remained constant over the course of training. 
For experiments on whitened data, the data was whitened before computing the preconditioner.

\subsection{Multilayer Perceptron}

\subsubsection{On MNIST}

{\bf Architecture.} We used a $784\times 512\times 512\times 10$ 
fully connected network with a rectified linear nonlinearity
in the hidden layers and a softmax function at the output layer.
Initial weights were sampled from a normal distribution with
variance $10^{-4}$.

{\bf Dataset composition.}
The term ``dataset size" here refers to the total size of the dataset, 
i.e. it counts the training as well as test examples. We did 
not use validation sets in the MLP experiments.
Datasets of varying sizes were randomly sampled from the MNIST training and test sets. Dataset sizes were chosen to tile the available range
($0$ to $70000$) evenly in $\log$ space. The smallest dataset size was $10$ 
and the two largest were $50118$ and $70000$. For all but the largest size, 
the ratio of training to test examples was $8:2$. The largest dataset 
size corresponded to the full MNIST dataset, with its training set of $60000$ 
images and test set of $10000$ images.

The only data preprocessing step (apart from whitening) that we performed
was to normalize all pixel values to lie in the range $[0, 1]$.

{\bf Whitening.} At each dataset size, three copies of the dataset were made and two were whitened. Of these, one was train-whitened and the other fully whitened. PCA whitening was performed. The same whitening transform was always applied to both 
the training and test sets.

{\bf Training and Measurements.} 
We used sum of squares loss function.
Initial weights were drawn from a Gaussian with mean zero and variance $10^{-4}$.
Training was performed with SGD using a constant learning rate
and batch size, though these were both modulated according to dataset size.
Between a minimum of $2$ and a maximum of $200$, batch size was chosen to be
a hundredth of the number of training examples. 
We chose a learning rate of $0.1$ if the number of training examples was
$\leq 50$, $0.001$ if the number of training examples was $\geq 10000$, 
and $0.01$ otherwise.
Models were trained to 0.999 training accuracy, 
at which point the test accuracy was measured, along with the number
of training epochs, and the accuracy on the full MNIST test
set of $10000$ images.
This procedure was repeated twenty times, using twenty 
different random seeds, for each dataset.
Means and standard errors across random seeds were calculated and are plotted in Fig.\,\ref{fig:mlp_mnist}.

For example, at the smallest dataset size of $10$, the workflow 
was as follows. Eight training images were drawn from the MNIST training and two as test images were drawn from the MNIST test set.
From this dataset, two more datasets were constructed by 
whitening the images. In one case the whitening 
transform was computed using only the eight training examples, and in another
by using all ten images. Three copies of the MLP were initialized
and trained on the eight training examples of each of the three
datasets to a training accuracy of 0.999. Once this training accuracy
was achieved, the test accuracy of each model on the two test examples, 
and on the full MNIST test set, was recorded, along
with the number of training epochs. This entire procedure was repeated twenty times.

{\bf Computation of the input rank of MNIST data.} MNIST images are encoded by $784$ pixel values.
However, the input rank of MNIST is much smaller than this. To estimate the 
maximal input rank of MNIST, at each dataset size (call it $n$) we constructed
twenty samples of $n$ images from MNIST. For each sample, we computed the 
$784\times 784$ feature-feature second moment matrix $F$ and its singular value
decomposition, and counted the number of singular values larger than some
cutoff. The cutoff was $10^{-5}$ times the largest singular value of $F$
for that sample. We averaged the resulting
number, call it $r$, over the twenty samples. If $r=n$, we increased
$n$ and repeated the procedure, until we arrived at the smallest
value of $n$ where $r>n$, which was $276$. This is what we call 
the maximal input rank of MNIST, and is indicated by the
solid black line in the plot in Appendix \ref{app:mnist_mlp}.

\subsubsection{On CIFAR-10}
The procedure for the CIFAR-10 experiments was almost identical
to the MNIST experiments described above. The differences are given here.

The classifier was of shape $3072\times 2000\times 2000\times 10$ - 
slightly larger in the hidden layers and of necessity in the input layer.

The learning rate schedule was as follows: $0.01$ if the number of training
examples was $\leq 504$, then dropped to $0.005$ until the number of training
examples exceeded $2008$, then dropped to $0.001$ until the number of training
examples exceeded $10071$, and $0.0005$ thereafter.

The CIFAR-10 dataset is full rank in the sense that the input rank 
of any subset of the data is equal to the dimensionality, $3072$, of the
images.

\subsubsection{Fig. \ref{fig:whitening_and_2nd_order_harm}(b), \ref{fig:mlp_mnist} plot details}

In Figs.\,\ref{fig:whitening_and_2nd_order_harm}(b) and \ref{fig:mlp_mnist}, for models trained on unwhitened data and train-whitened data, we have plotted test error evaluated on the full CIFAR-10 and MNIST test sets of
$10000$ images. For models trained on fully whitened data, we have plotted the errors on the test 
examples that were included in the computation of the whitening transform.

\subsection{Convolutional Networks}

\subsubsection{WRN}

{\bf Architecture.} We use the ubiquitous Wide ResNet 28-10 architecture from~\citep{zagoruykoK16-wide-resnet}. This architecture starts with a convolutional embedding layer that applies a $3\times 3$ convolution with 16 channels. This is followed by three ``groups'', each with four residual blocks. Each residual block features two instances of a batch normalization layer, a convolution, and a ReLU activation. The three block groups feature convolutions of 160 channels, 320 channels, and 640 channels, respectively. Between each group, a convolution with stride 2 is used to downsample the spatial dimensions. Finally, global-average pooling is applied before a fully connected readout layer.

{\bf Dataset composition.} We constructed thirteen datasets from subsets of CIFAR-10. The thirteen training sets ranged
in size from $10$ to $40960$, and  
consisted of between $2^{0}$ and $2^{12}$ examples per class. In addition, we constructed a validation set of 5000 images taken from the CIFAR-10 training set which we used for early stopping. Finally, we use the standard CIFAR-10 test set to report performance.

{\bf Whitening.} We performed ZCA whitening using 
only the training examples to compute the whitening transform.

{\bf Training and Measurements.} We used a cross entropy loss and the Xavier weight initialization.
We performed full-batch gradient descent training with a learning rate of $10^{-3}$ until the training error was less than $10^{-3}$. We use TPUv2 accelerators for these experiments and assign one image class to each chip. Care must be taken to aggregate batch normalization statistics across devices during training.
After training, the test accuracy at the training step
corresponding to the highest validation accuracy was reported. At each
dataset size, this procedure was repeated twice, using two different 
random seeds. Means and standard errors across seeds were calculated and are plotted in Fig.\,\ref{fig:whitening_and_2nd_order_harm}(c).

\subsubsection{CNN}

{\bf Architecture.} The network consisted of a single ResNet-50 convolutional block followed by a flattening operation and two 
fully connected layers of sizes $100$ and $200$, successively.
Each dense layer had a $tanh$ nonlinearity and was followed by a layer norm operation.

{\bf Dataset composition.} Training sets were of eleven sizes: $10,$ $20,$ $40,$ $80,$ $160,$ $320,$ $640,$ $1280,$ $2560,$ $5120,$ and $10240$ examples. A validation set of $10000$ images was split from the CIFAR-10 training set.

{\bf Training and Measurements.} We used a cross entropy loss. 
Initial weights were drawn from a Gaussian with mean zero and variance $10^{-4}$.
Training was accomplished once with the Gauss-Newton method (see~\citep{botev2017-scalable-guass-newton} for details), once with full batch gradient descent, and once with a regularized Gauss-Newton method. With a regularizer $\lambda\in[0,1]$, the usual preconditioning matrix $B$ of the Gauss-Newton update was modified as $\left((1-\lambda)B+\lambda I\right)^{-1}$. This method interpolates between pure Gauss-Newton ($\lambda = 0$) and gradient descent ($\lambda=1$).
In the Gauss-Newton experiments, we used conjugate gradients to solve for update directions; 
the sum of residuals of the conjugate gradients solution was required to be at most $10^{-5}$.

For the gradient descent and unregularized Gauss-Newton experiments, 
at each training set size, ten CNNs were trained beginning with 
seven different initial learning rates: $2^{-8}, 2^{-6}, 2^{-4}, 2^{-2}, 1, 4, $ and $16$.
After the initial learning rate, backtracking line search was used to choose subsequent step sizes. Models were trained until they achieved $100\%$ training accuracy.
The model with the initial learning rate that achieved the best validation performance of the seven was then selected. Its test performance on the CIFAR-10 test set was evaluated at the training step corresponding to its best validation performance.
The entire procedure was repeated for five random seeds. 
In Fig.\,\ref{fig:whitening_and_2nd_order_harm}(d), we have plotted average test and validation losses over the random seeds as functions of dataset size and training algorithm. In Fig.\,\ref{fig:faster_training}(c), we have plotted an example of the validation and training performance trajectories over the course of training for a training set of size $10240$.

For the regularized Gauss-Newton experiment, the only difference is that we trained one CNN at each initial learning rate per random seed, and then selected the model with the best validation performance.
In Fig.\,\ref{fig:regularized-second-order}, we have plotted average metrics over the five random seeds. Errorbars and shaded regions indicate twice the standard error in the mean.

\end{document}